\documentclass{article}

\usepackage{microtype}
\usepackage{graphicx}
\usepackage{subcaption}
\usepackage{booktabs} 
\usepackage{bm}
\usepackage{hyperref}
\usepackage{mdframed}\usepackage{colortbl} 
\usepackage{xcolor}   
\usepackage{graphicx}

\usepackage[accepted]{icml2026}

\usepackage{amsmath} 

\usepackage{amssymb}
\usepackage{mathtools}
\usepackage{amsthm}
\usepackage{booktabs}
\usepackage{multirow}
\usepackage{makecell}
\usepackage{graphicx} 
\usepackage{booktabs}
\usepackage{multirow}
\usepackage{amsmath}
\usepackage{amsthm}
\usepackage{subcaption}
\usepackage[capitalize,noabbrev]{cleveref}
\usepackage{booktabs}
\usepackage{arydshln}
\usepackage{colortbl}
\usepackage{xcolor}
\usepackage{multirow}
\theoremstyle{plain}
\newtheorem{theorem}{Theorem}[section]

\newtheorem{lemma}[theorem]{Lemma}

\theoremstyle{definition}

\newtheorem{remark}[theorem]{Remark}

\usepackage[textsize=tiny]{todonotes}

\icmltitlerunning{Plug-and-Play Spiking Operators: Breaking the Nonlinearity Bottleneck in Spiking Transformers}

\begin{document}

\twocolumn[
 



  \icmltitle{Plug-and-Play Spiking Operators: Breaking the Nonlinearity Bottleneck in Spiking Transformers}
  \begin{icmlauthorlist}
    \icmlauthor{Xinzhe Yuan}{1}
    \icmlauthor{Xiang Peng}{1}
    \icmlauthor{Bin Gu}{2}
    \icmlauthor{Huan Xiong}{1}

  \end{icmlauthorlist}

  \icmlaffiliation{1}{IASM, Harbin Institute of Technology, China}
  \icmlaffiliation{2}{School of Artificial Intelligence, Jilin University, China}

  \icmlcorrespondingauthor{Bin Gu}{gubin@jlu.edu.cn}
  \icmlcorrespondingauthor{Huan Xiong}{huan.xiong.math@gmail.com}

  \icmlkeywords{Machine Learning, ICML}

  \vskip 0.3in
]



\printAffiliationsAndNotice{}  

\begin{abstract}
ANN-to-SNN conversion offers a practical, training-free route to spiking large language models.
However, current pipelines primarily focus on spike-driven realizations for Transformer linear-algebra operations, while providing limited support for key nonlinear operators.
This gap limits compatibility with neuromorphic-style execution constraints, where such nonlinearities typically require division, exponentiation, or norm computations that are not naturally supported by standard leaky integrate-and-fire dynamics.
To solve this problem, we propose a plug-and-play framework that implements spike-friendly approximations for Transformer nonlinearities and integrates into existing ANN-to-SNN pipelines.
Our method decomposes these nonlinear computations into three recurring primitives---division, exponentiation, and $\ell_2$ norms---and realizes them via  population computation using LIF neuron groups, combined with lightweight bit-shift scaling to avoid floating-point arithmetic.
By composing these primitives as modular operator blocks, our framework supports common Transformer nonlinearities (e.g., Softmax, SiLU, and normalization) without any fine-tuning.
Experiments on a range of LLMs Transformers show that selectively replacing the targeted nonlinear operators incurs less than a $1\%$ accuracy drop across all evaluated tasks.
\end{abstract}

\section{Introduction}

Recently, spiking neural networks (SNNs) have gained increasing attention for energy-efficient, event-driven computation on neuromorphic hardware~\cite{roy2019towards,davies2018loihi,akopyan2015truenorth}. These models have shown promising results in both computer vision and natural language processing~\cite{cao2015spiking,zhou2023spikformer,lv2023spikebert,zhu2023spikegpt}. Meanwhile, Transformer-based foundation models, particularly large language models (LLMs), have become the dominant inference workloads. Their costs are largely driven by dense linear algebra and associated memory traffic~\cite{horowitz20141}.

Motivated by this, recent spiking Transformers and SNN-LLM efforts have increasingly adopted ANN-to-SNN conversion to reduce inference cost without expensive retraining~\cite{rueckauer2017conversion,chen2025fas,chen2025las,you2024spikezip}. 
These methods primarily convert Transformer linear computations, such as attention matrix multiplications and feed-forward projections, into spike-driven implementations that leverage event sparsity for improved energy efficiency~\cite{rueckauer2017conversion,pan2023rethinking}.

\textbf{However}, this spike-centric research paradigm remains incomplete for Transformer architectures, as existing works rarely provide spike-based realizations of key nonlinear operators such as activation functions, normalization layers, and Softmax~\cite{zhou2023spikformer,shi2024spikingresformer}. Although these components are often considered secondary from an energy-accounting perspective~\cite{horowitz20141}, they become decisive under strict spike-only deployment. They are difficult to represent with standard leaky integrate-and-fire (LIF) neurons, since they commonly require division or norm computations that do not naturally arise from LIF dynamics. More critically, on neuromorphic hardware platforms that support only spike-based primitives, continuous-valued states are unavailable, making conventional implementations a fundamental obstacle to deployment~\cite{davies2018loihi,davies2021advancing}. Consequently, existing ANN-to-SNN approaches often fall short of truly end-to-end spiking Transformers under strict spike-only constraints, \emph{including their nonlinearities}~\cite{zhu2023spikegpt,lv2023spikebert}.

To address this limitation, a few works approximate Transformer nonlinearities with spiking computation. But they typically require additional training, limiting their compatibility with standard ANN-to-SNN conversion pipelines~\cite{he2023sorbet}. Motivated by these observations, we ask the following natural and important question:
\begin{mdframed}[backgroundcolor=white, linecolor=black, linewidth=0.5pt, roundcorner=5pt, shadow=true]
\itshape
Can we design plug-and-play spike-based conversion for nonlinear operators in ANN-to-SNN pipelines?
\end{mdframed}

In this work, we answer this question affirmatively by developing \emph{training-free} spike-based replacements that are compatible with standard LIF dynamics. We identify three primitive nonlinear computations that repeatedly arise in Transformer inference: division, exponentiation, and $\ell_2$ norms, which form the computational core of Softmax, SiLU, and RMSNorm. Our spike-friendly realizations are built from population computation using LIF neuron groups, together with simple shift-based scaling that avoids floating-point arithmetic. By constructing approximations for these primitives and composing them as modular operator blocks, we obtain fully spiking realizations of the above Transformer nonlineariti                                                                                                                                                                                                                                           es under strict spike-only constraints, without any additional fine-tuning.

These operator-level modules are composable and can be plugged into existing ANN-to-SNN conversion pipelines to enable end-to-end spiking Transformer blocks, while respecting the spike-only primitives and lightweight digital operations natively supported by neuromorphic hardware. Our main contributions are summarized as follows:



\begin{itemize}
    \item \textbf{We propose a spike-friendly approach to approximate Transformer nonlinear operators.}  
    Our method operates at the operator level and can be seamlessly integrated into existing ANN-to-SNN conversion pipelines without requiring any modifications to model weights.

    \item \textbf{We provide theoretical guarantees on the conversion error.}  
    We show that our spike-based approximations of nonlinear operators admit provably bounded conversion errors under mild conditions. Additionally, we identify specific configurations that can achieve high approximation accuracy.

    \item \textbf{We empirically demonstrate the applicability of the proposed method across different models.}
    We test our method on two widely used ANN-to-SNN conversion frameworks. Additionally, we apply our approach to well-known models that have not previously been converted to SNNs, such as Qwen3, by selectively replacing their nonlinear operators while keeping the other computational parts unchanged, to verify the broad applicability of our method.
\end{itemize}
\section{Related Work}
Early ANN-to-SNN conversion methods approximate continuous ANN activations using firing-rate coding and scale alignment. Diehl et al. introduce weight and threshold balancing to enable fast and accurate inference in deep spiking networks \citep{diehl2015fast}. Sengupta et al. further extend this conversion paradigm to deeper architectures, demonstrating that VGG and residual networks can be converted to SNNs without retraining while maintaining competitive performance \citep{sengupta2019going}. As Transformers became the dominant model class, You et al. propose SpikeZIP-TF, which systematically applies ANN-to-SNN conversion to Transformer architectures by spiking the linear projections in attention and feed-forward layers \citep{you2024spikezip}. At the scale of large language models, Xing et al. introduce SpikeLLM, which constructs large spiking language models using saliency-driven spike allocation and reports efficiency–accuracy trade-offs compared with standard low-bit inference pipelines \citep{xing2025spikellm}.

Distinct from the above works that primarily focus on dense linear algebra, Sorbet constructs spike-based realizations of nonlinear functions using shift-based discrete operations, and applies knowledge distillation and fine-tuning to align the spiking model with the original BERT behavior \citep{he2023sorbet}. Although it identifies the issue of non-linear operators, its method requires training and is incompatible with other pipelines.

\section{Preliminary}
\paragraph{Spiking Neurons and Temporal Accumulation}
SNNs process information over discrete timesteps $t = 1, \dots, T$. 
At each timestep, neurons emit binary spikes and communicate through temporally accumulated activity.
As a result, real-valued quantities can be represented implicitly by spike counts over a finite temporal window.

In practice, ANN-to-SNN conversion methods typically rely on rate-based representations, where the accumulated spike activity approximates the activation of an artificial neuron:
\begin{equation}
a_{\mathrm{SNN}} \approx \sum_{t=1}^{T} s^t \cdot \theta,
\label{eq:rate-code}
\end{equation}
where $s^t \in \{0,1\}$ denotes the spike at timestep $t$, and $\theta$ is the firing threshold.
Under appropriate scaling, the expected spike count matches the quantized ANN activation, enabling a direct correspondence between ANN activations and SNN spike statistics~\cite{rueckauer2017conversion}.

\paragraph{Leaky Integrate-and-Fire Neuron}
LIF neuron is the most commonly used neuron model in SNNs.
In discrete time, its dynamics are given by
\begin{align}
v(t) &= \lambda v(t-1) + I(t), \\
s(t) &= \mathbb{I}\left[v(t) \geq \theta \right], \\
v(t) &\leftarrow v(t) - s(t)\theta ,
\end{align}
where $v(t)$ denotes the membrane potential, $I(t)$ the input current, $\lambda \in (0,1]$ the leak factor, and $\theta$ the firing threshold, $\mathbb{I}[\cdot]$ denotes the indicator function.

This accumulate--threshold--reset mechanism allows LIF neurons to approximate linear transformations through temporal spike accumulation, forming the basis of most existing ANN-to-SNN conversion methods.

\section{Method}\label{sec:method}
LLMs heavily rely on certain nonlinear operations that involve exponentials, divisions, and square roots. Specifically, the following are the formulas for three key functions we focus on, which involve the ratio of a numerator and a denominator:
\begin{equation}
\begin{split}
\phi_\mathrm{Softmax}(x_i) &= \frac{e^{x_i}}{\sum_j e^{x_j}},\\
\phi_\mathrm{SiLU}(x) &= \frac{x}{1 + e^{-x}},\\
\phi_\mathrm{RMSNorm}(x) &= \frac{x}{\sqrt{\tfrac{1}{d}\sum_{i=1}^d x_i^2 + \epsilon}}.\\
\end{split}
\label{eq:functions}
\end{equation}
We specifically focus on RMSNorm because it involves the most challenging part of normalization, the computation of the $\ell_2$-norm (via squaring, summation, and square root operations) followed by a division. The approximation of RMSNorm can also be easily extended to LayerNorm by using mean subtraction, which can be implemented through addition operations.

\begin{remark}[The Division Family]
    These normalization operations share a profound algebraic structure: they are inherently \emph{fractional}—a signed numerator normalized by a strictly positive denominator.  
    This observation suggests a modular spiking implementation strategy: rather than approximating the entire nonlinear function end-to-end, we decompose it into numerator and denominator components, each amenable to spike-friendly approximation.  
    Crucially, we propose treating division itself as a \emph{primitive operation in the spiking domain}, realized through a dedicated \emph{division neuron ensemble} that emits spikes to represent the quotient.  
    This approach not only preserves the structural fidelity of the original computation but also opens a new pathway toward building spiking neural networks with explicit awareness of underlying algebraic forms.
\end{remark}
\begin{figure}[ht]     
  \centering
  \includegraphics[width=0.48\textwidth]{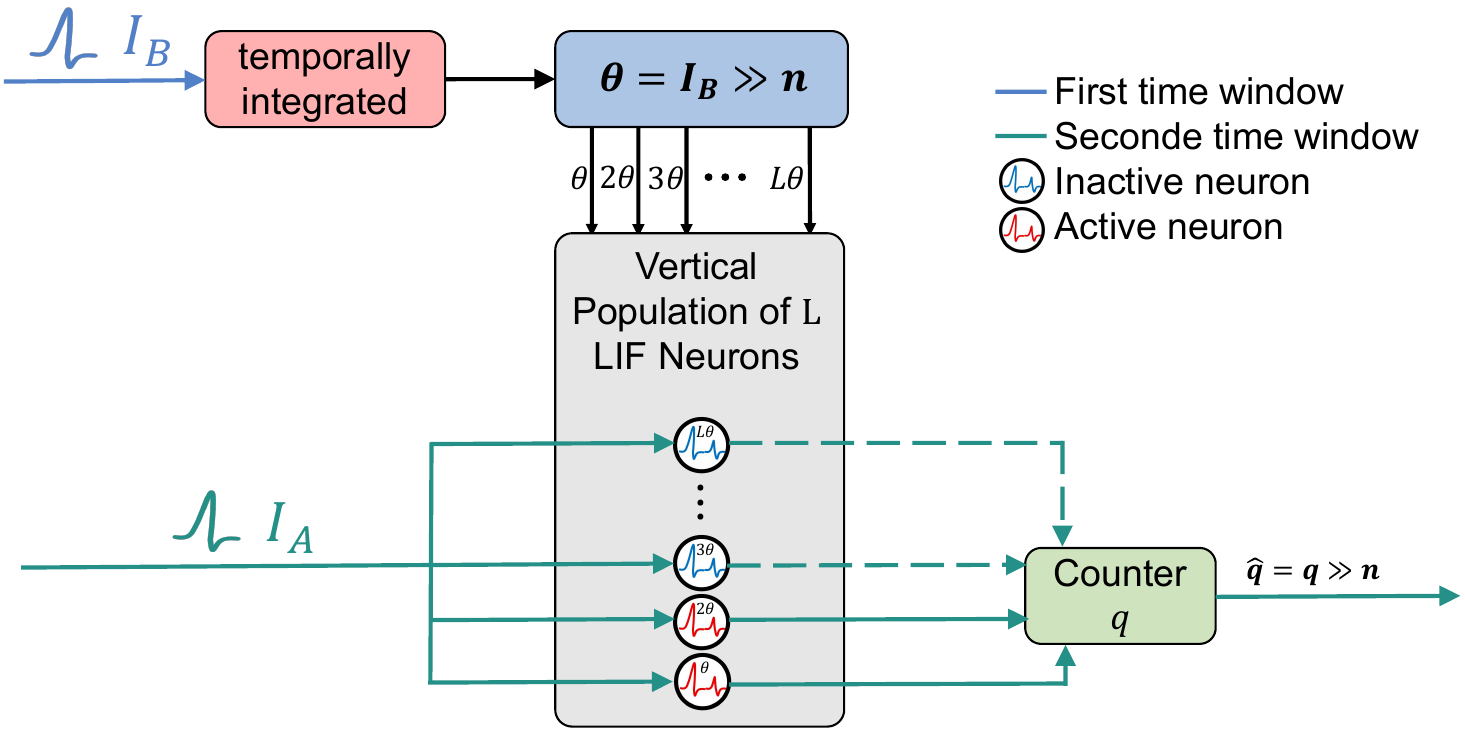}
  \caption{Overview of division neuron group.}
  \label{fig:division}
\end{figure}

\begin{figure*}[t]     
  \centering
  \includegraphics[width=1\textwidth]{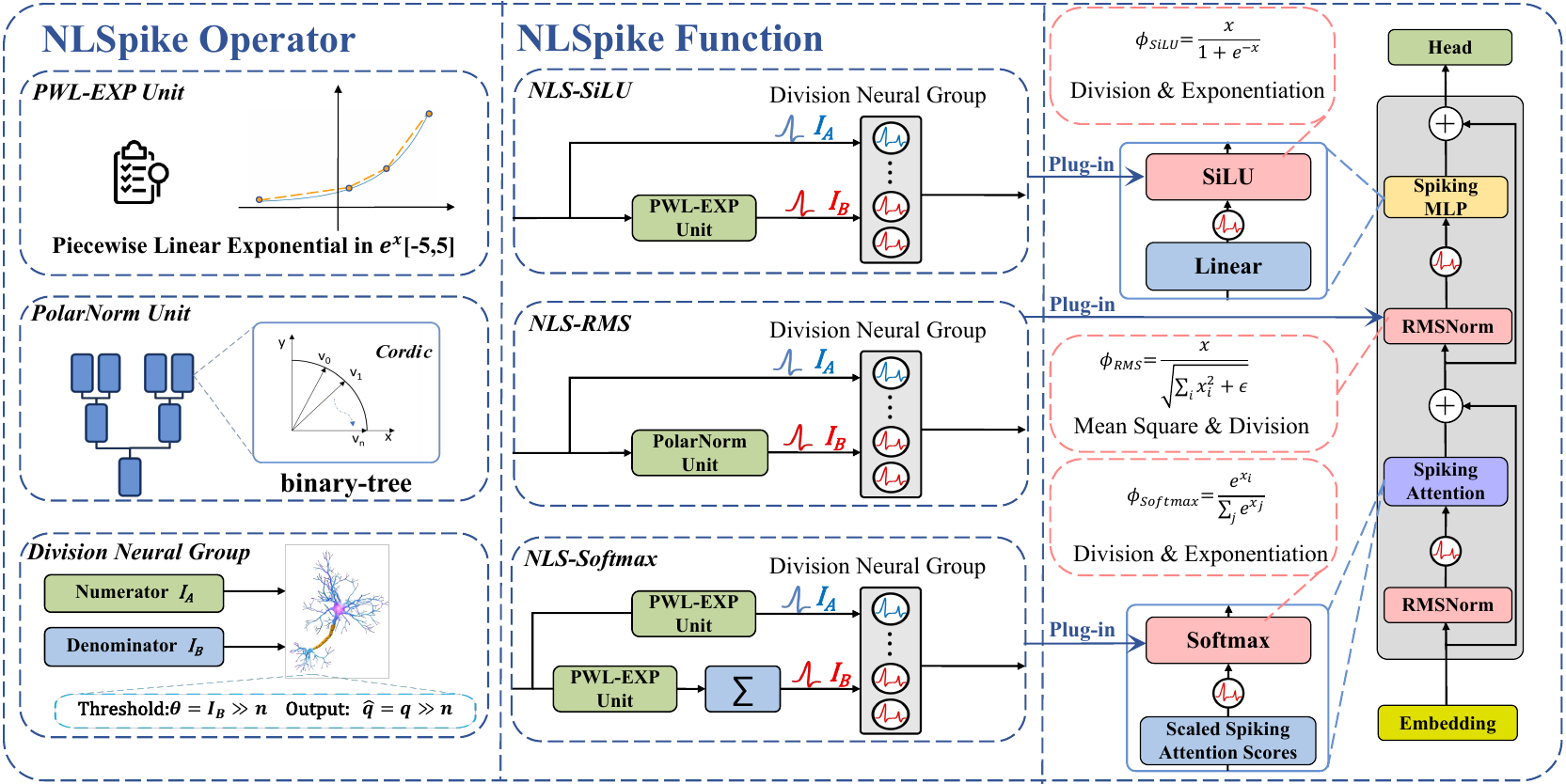}
  \caption{\textbf{Overview of the NLSpiking.}
   Each spike-unfriendly function (SiLU, Softmax, RMSNorm) is approximated using modular spiking Blocks: the Piecewise Linear Exponential (PWL-EXP) Unit, PolarNorm Unit, and Division Neuron.}
  \label{fig:overview}
\end{figure*}

\subsection{Core Building Blocks}
\subsubsection{Division Neuron Group}
We start with the core division neuron group. Although the division operator is hardly computable by neurons, integer division serves as a promising alternative. By merely controlling the error of integer division within a reasonable range, it can be regarded as an approximate division operation. Following this rationale, we implement a spike-native division approximation using a population of  $ L $  standard LIF neurons with ordered thresholds and $\lambda = 1$.

As shown in Figure~\ref{fig:division}, in typical usage, both $I_A$ and $I_B$ are spike-coded signals.
The division operation is carried out in a two-stage manner.
In the first temporal window, the denominator input $I_B$ is temporally integrated to estimate a normalization scale.
This scale is then held fixed and applied as population thresholds during a second temporal window, in which the numerator input $I_A$ drives the division neuron group.
This separation reflects the fact that division depends on the aggregate magnitude of the denominator rather than its precise spike timing.

\paragraph{Threshold construction.}
Let $I_B(t)$ denote the spike-coded denominator input over the first temporal window of length $T$.
We define the accumulated denominator
\begin{equation}
I_B \triangleq \sum_{t=1}^{T} I_B(t),
\end{equation}
which yields a scalar quantity through temporal integration.
From this accumulated value, we derive a base threshold via a right shift
\begin{equation}
\theta \triangleq I_B \gg n \;=\; \left\lfloor \frac{I_B}{2^n} \right\rfloor,
\label{eq:theta_shift}
\end{equation}
and assign the $i$-th neuron in the division population the threshold
\begin{equation}
\theta_i = i\,\theta, \qquad i=1,\dots,L.
\label{eq:theta_i}
\end{equation}

We choose both the temporal length $T$ and the population size $L$ as powers of two, and set
\begin{equation}
n = \log_2(TL),
\label{eq:n_choice}
\end{equation}
so that the effective scale of $\theta$ matches the dynamic range of temporal accumulation.
Once constructed, the thresholds $\{\theta_i\}$ remain fixed throughout the subsequent computation window.

\paragraph{Population decoding as integer division.}
During the second temporal window of length $T$, the spike-coded numerator input $I_A(t)$ is applied to the division neuron group.
Neuron $i$ fires if and only if $I_A(t) \ge \theta_i = i\,\theta$.
We decode the quotient by counting the number of active neurons,
\begin{equation}
    q \triangleq \sum_{i=1}^{L} s_i,\   s_i \in \{0,1\};\qquad
    \hat q = q \gg n, 
\label{eq:decode_q}
\end{equation}
which yields
\begin{equation}
\hat q
= \sum_{t=1}^T\max\{ i \mid v(t) \ge i\theta \}
= \left\lfloor \frac{\sum_{t=1}^Tv(t)}{\theta}\right\rfloor.
\label{eq:floor_div}
\end{equation}
Noticing that $\left\lfloor \frac{\sum_{t=1}^Tv(t)}{\theta}\right\rfloor= \left\lfloor\frac{\sum_{t=1}^TI_A(t)}{\theta}\right\rfloor$, substituting $\theta = I_B \gg n$ from~\eqref{eq:theta_shift} completes a spike-native discretization of division, in which the denominator is derived from a temporally integrated spike signal using only a shift operation.

\begin{remark}[Interpretation as Population Competition]
The proposed division neuron group implements integer division by decomposing it into two stages:
temporal estimation of a normalization scale from the denominator, followed by population-based competition under a shared input drive.
The resulting winner index, or equivalently the number of active neurons, encodes the quotient.
This mechanism relies solely on standard LIF dynamics and ordered thresholds, and does not require multi-level spikes or specialized neuron models.
\end{remark}

\subsubsection{PolarNorm Unit (PN Unit).}
Having approximated the division, the second challenge arises from the norm, as square and square root operations are equally difficult to perform within spiking computations. To solve this problem, we introduce the \emph{PolarNorm Unit (PN Unit)}. Specifically, we consider the expression $\sqrt{\sum_{i=1}^d x_i^2 + \epsilon d}$ and aim to approximate it using only additions, subtractions, comparisons, and bit-shifts—without general-purpose multiplications or square roots. From a geometric perspective, this term corresponds to the Euclidean norm of a vector, suggesting that the problem can be reformulated as a sequence of length-preserving vector reductions rather than explicit arithmetic operations.

Inspired by this, let $\mathbf{v} = [x_1, x_2, \dots, x_d, \sqrt{\epsilon d}]$ be the augmented input vector. Our goal is to estimate $\|\mathbf{v}\|_2$ via a recursive reduction process using the \textit{CORDIC-Hypot} algorithm~\cite{volder2009cordic}, which approximates $\sqrt{x^2 + y^2}$ with only shift-add logic.

The approximation is performed using a binary-tree structure: adjacent elements of $\mathbf{v}$ are recursively merged via CORDIC-Hypot operations. Each CORDIC step updates $(x_k, y_k)$ as
\begin{align}
    x_{k+1} = x_k + d_k \cdot \frac{y_k}{2^k},\quad
y_{k+1} = y_k - d_k \cdot \frac{x_k}{2^k},
\end{align}
where $d_k = \mathrm{sign}(y_k)$.
After $n$ iterations, the output $x_n$ approximates $\sqrt{x^2 + y^2}$ up to a constant gain. Applying this recursively over the tree yields a scalar norm estimate. The final result is rescaled by a fixed inverse gain $1/K_n$ to correct for CORDIC accumulation. This constant factor is an integer power of $2$, and therefore does not violate spike-friendly constraints: it can be absorbed into the scale $theta$ or approximated using fixed-point scaling.
\subsubsection{Piecewise Linear Exponential Unit (PWL-Exp Unit)}

Last but not least, the computation of exp(x) remains infeasible in the realm of spiking computation. To address this, we introduce the PWL-Exp Unit.

We restrict our approximation to the interval $[-L, L]$, which covers the effective input range for most normalized activations. This range is divided into K uniform segments. Let $\gamma=2L/K$ and we apply piecewise linear interpolation for each subinterval:
\begin{align*}
    e^x \approx a x + b = \frac{e^{x_{i+1}} - e^{x_i}}{x_{i+1} - x_i}(x - x_i) + e^{x_i}, \\
    \quad x_i = -L + \gamma i,\quad i = 0, 1, \dots, K-1.
\end{align*}
The coefficients $a$ and $b$ are precomputed and stored in lookup tables to avoid using the multiplication operator. To minimize memory usage, the coefficient $a$ is quantized to 8-bit fixed-point precision. This design eliminates the need for online exponential computation and enables high-throughput deployment in SNNs.

The PWL-Exp Unit introduces negligible overhead and operates with a cost comparable to a lightweight linear mapping. Its structure is particularly amenable to LUT-based implementations in neuromorphic hardware.

\subsection{Nonlinear Spiking Function (NLSpiking)}
We now build NLSpiking functions. The key observation behind NLSpiking is that a wide class of nonlinear functions used in LLMs shares a common fractional structure: an input-dependent numerator modulated by a positive normalization term.
Rather than approximating each nonlinear function independently, we factorize them into a numerator path and a denominator path, each implemented using spike-friendly primitives, and recombine them through the Division Neuron.

This decomposition provides a unified abstraction for seemingly different nonlinearities.
Softmax, SiLU, and RMSNorm differ mainly in how the numerator and denominator are constructed, while the overall computational skeleton remains identical.
\paragraph{Softmax.} Softmax is characterized by a competitive normalization across multiple inputs, where the relative scale between exponentiated activations determines the output distribution.
We first stabilize the input using the shift-invariance property of Softmax:
\[
z_i \leftarrow z_i - \max_j z_j + H,
\]
ensuring $z_i \in (-\infty, H]$. To suppress negligible contributions, the PWL-Exp Unit outputs zero for inputs below $-H$.

\textbf{Numerator:} $\exp(z_i)$ is approximated using the PWL-Exp Unit within the interval $[-H, H]$.

\textbf{Denominator:} The summation $\sum_j \exp(z_j)$ is computed through temporal accumulation over $T$ timesteps.

\textbf{Output:} The final normalized output is obtained via the Division Neuron.

﻿
This formulation avoids runtime exponentiation and division, enabling low-latency spike-based inference with bounded output.
﻿
\paragraph{SiLU.} Unlike Softmax, SiLU is a self-modulated nonlinearity, where the input simultaneously contributes to both the numerator and the normalization term.
For SiLU, we constrain the domain to $[-H, H]$ and extend the output linearly beyond this range:
$\mathrm{SiLU}(x) \leftarrow x$ for $x > H$, and $\mathrm{SiLU}(x) \leftarrow 0$ for $x < -H$.

\textbf{Numerator:} The input $x$ is directly encoded as a spike train.

\textbf{Denominator:} The term $1 + \exp(-x)$ is approximated by applying the PWL-Exp Unit to $-x$ and adding 1.

\textbf{Output:} The final normalized value is computed via the Division Neuron.

﻿
This construction preserves the smooth nonlinearity of SiLU while maintaining spike compatibility.
﻿
\paragraph{RMSNorm.} RMSNorm differs fundamentally from Softmax and SiLU in that its denominator encodes a vector magnitude rather than an activation-dependent gating term.
To approximate RMSNorm, we follow the transformed formulation:

\textbf{Numerator:} Each $x_i$ is scaled by the constant $\sqrt{d}$ and encoded as a spike train.

\textbf{Denominator:} The expression $\sqrt{\sum x_i^2 + \epsilon d}$ is approximated using the PolarNorm Unit.

\textbf{Output:} The final normalized value is computed via the Division Neuron.

All nonlinear spiking functions in NLSpiking follow the same modular construction paradigm, relying exclusively on primitive, spike-friendly operations.
By explicitly factorizing each nonlinearity into a numerator path and a normalization path, approximation errors are isolated within individual modules rather than entangled across the computation.
This modularity enables systematic error analysis and exposes clear control knobs over accuracy, latency, and resource usage, which is particularly amenable to hardware co-design.

Beyond the specific instances of Softmax, SiLU, and RMSNorm, this construction naturally extends to related normalization-based variants.
For example, LayerNorm can be viewed as an RMSNorm augmented with a mean subtraction term, which can be implemented using only additions and accumulations without altering the overall division-based structure.
This highlights the generality of NLSpiking as a unifying framework for spike-compatible nonlinear normalization.

A detailed discussion of approximation accuracy, the influence of constants such as $\varepsilon$, and the selection of hyperparameters—including the range bound $H$—is provided in Section~\ref{sec:error}, where we rigorously derive error bounds and offer principled guidelines for configuring NLSpiking functions.

\section{Performance Analysis}\label{sec:error}
\begin{figure*}[t]
    \centering
    \begin{subfigure}[t]{0.42\linewidth}
        \centering
        \includegraphics[width=1\linewidth]{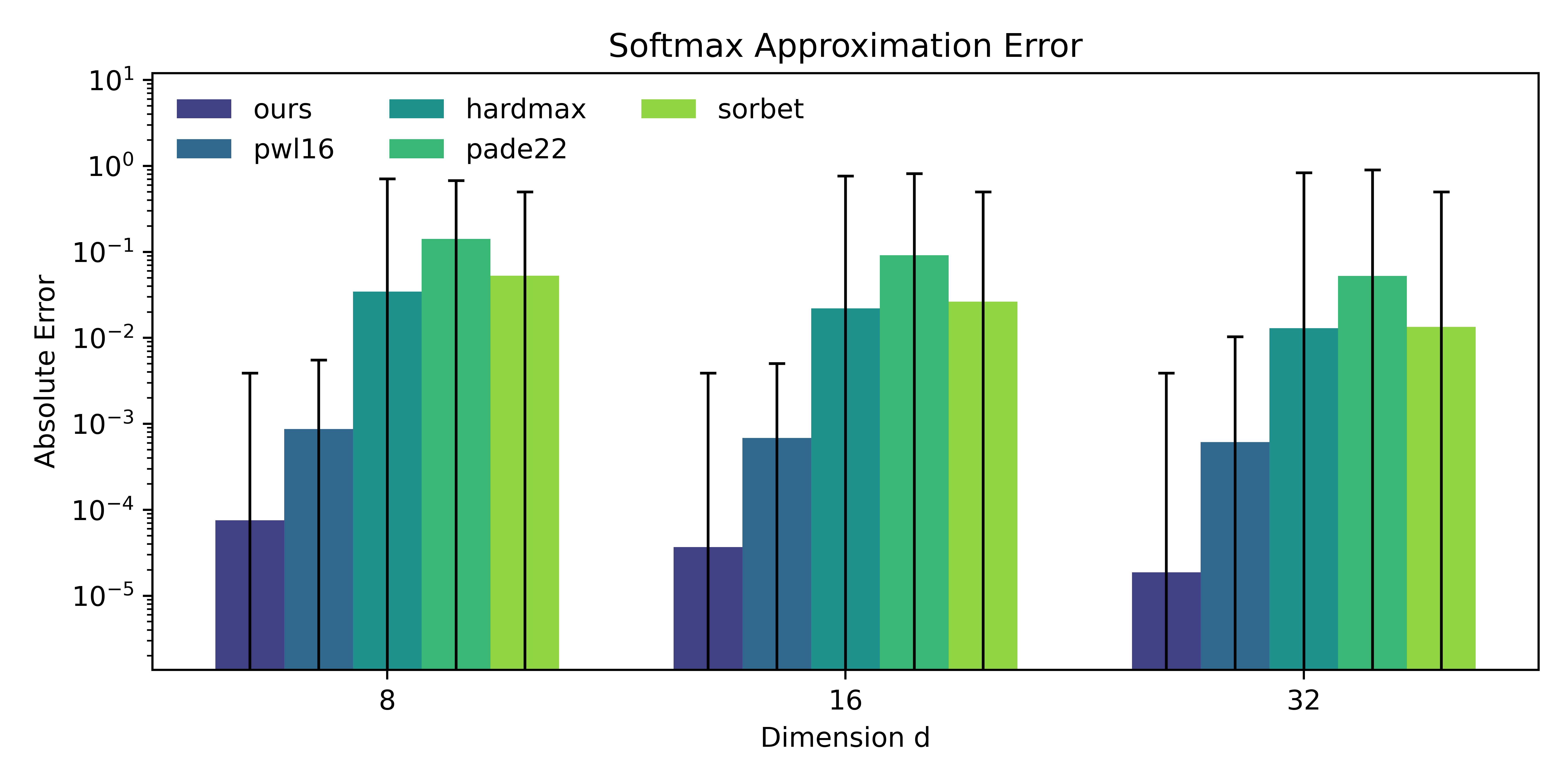}
        \caption{Operator-level errors for Softmax approximations under 8-bit quantization.}
        \label{fig:softmax-appendix}
    \end{subfigure}
    \quad
    \begin{subfigure}[t]{0.42\linewidth}
        \centering
        \includegraphics[width=1\linewidth]{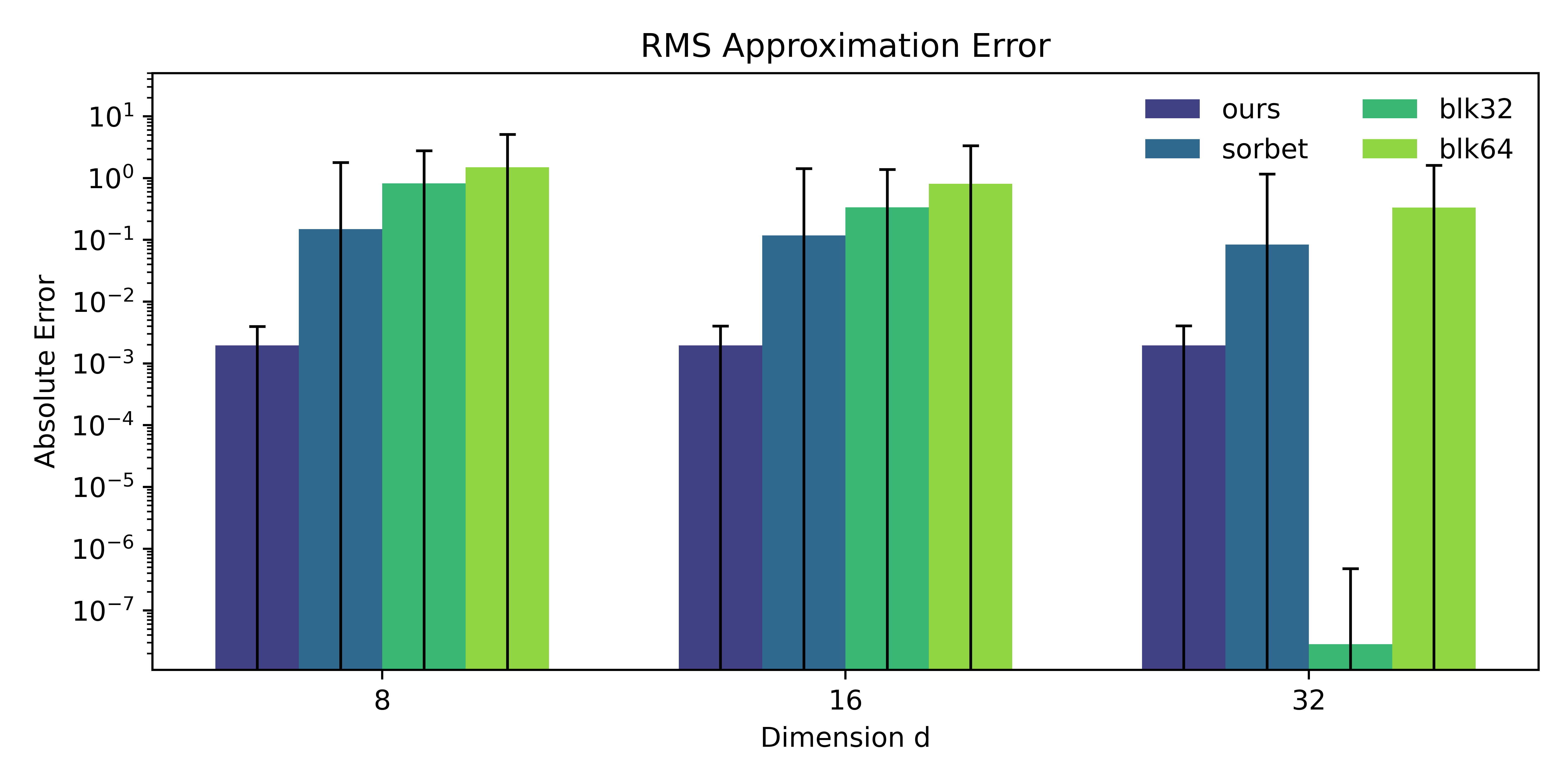}
        \caption{Operator-level errors for RMSNorm approximations under 8-bit quantization.}
        \label{fig:rms-appendix}
    \end{subfigure}

    \caption{Operator-level errors under 8-bit quantization. Error bars indicate the gap between mean and maximum absolute error. NLS-Softmax achieves the lowest mean error across dimensions while keeping bounded maximum error under integer-only implementation, and NLS-RMS yields lower mean errors than blockwise and Sorbet baselines with stable performance across dimensions.}
    \label{fig:softmax-rms-appendix}
\end{figure*}

To validate the effectiveness of the NLSpiking framework, we analyze its performance from two complementary perspectives: \emph{accuracy} and \emph{memory footprint}, aiming to provide a holistic view of how spike-based approximations balance functional fidelity with hardware constraints. 
Our analysis is guided by a central question: whether the modular decomposition in NLSpiking leads to controlled and predictable approximation errors, or whether errors introduced by individual spike-based operators accumulate uncontrollably.
To this end, we derive explicit per-function error bounds that decompose the total approximation error into contributions from exponentiation, division, and norm estimation. Throughout this section we denote by:
\[
\varepsilon_{\exp}=\frac{L^2}{2K^2}e^{2L/K},\quad~\Delta=\frac1{n},\quad~\varepsilon_{\text{pol}}
      =\bigl\lceil\log_2 d\bigr\rceil\,2^{-2n-1},
\]
the relative error of PWL-Exp, the quantisation step of the $(T,L)$-Division neuron,
and the relative $\ell_2$–norm error of an $n$–step CORDIC tree
in the PolarNorm Unit, respectively. 

\begin{theorem}[Error Bounds for NLSpiking]\label{thm:es-unified}
For each NLSpiking function $\tilde\phi$, assume the standard spike-based approximations (PWL-Exp, Division Neuron, and PolarNorm) are employed. Then the following per-output error bounds hold:
\begin{enumerate}\setlength{\itemsep}{4pt}
\item[\textbf{(i)}] \textbf{Softmax.}\;
For every class $i$,
$\frac{|\tilde \phi_i-\phi_i|}{\phi_i}
\;\le\;
\frac{2}{1-\varepsilon_{\exp}}\,
\bigl(\varepsilon_{\exp}+\Delta\bigr).$
\item[\textbf{(ii)}] \textbf{SiLU.}\;
For every $x\in[-H,H]$, $|\tilde \phi(x)-\phi(x)|
\;\le\;
|x|\,
\frac{2\varepsilon_{\exp}}{1-\varepsilon_{\exp}}
\;+\;
|x|\,\Delta.$
\item[\textbf{(iii)}] \textbf{RMSNorm.}\;
For each coordinate $i$, $\frac{|\tilde \phi_i-\phi_i|}{\phi_i}
\;\le\;
\frac{\varepsilon_{\text{pol}}+\Delta}{1-\varepsilon_{\text{pol}}}
\;+\;
\sqrt{d}\,\Delta.$
\end{enumerate}

\end{theorem}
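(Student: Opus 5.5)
The plan is to treat all three items as instances of one abstract lemma about perturbed quotients, and then feed in the per-primitive error guarantees. Throughout, each primitive is modeled as a relative perturbation. PWL-Exp returns $\tilde e(x)=e^{x}(1+\delta_{\exp})$ with $|\delta_{\exp}|\le\varepsilon_{\exp}$. PolarNorm returns $\tilde N=N(1+\delta_{\text{pol}})$ with $|\delta_{\text{pol}}|\le\varepsilon_{\text{pol}}$. The Division Neuron, by \eqref{eq:floor_div}, returns $\lfloor A/\theta\rfloor$ after the scaling in \eqref{eq:theta_shift}--\eqref{eq:n_choice}, which I will write as $\widetilde{A/B}=(A/B)(1+\delta_{\div})$ with $|\delta_{\div}|\le\Delta$ (or, where stated, an additive error $|A/B|\,\Delta$).

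The first step is to justify $\varepsilon_{\exp}$. On each segment of length $\gamma=2L/K$, the linear-interpolation remainder is $\tfrac12 f''(\xi)(x-x_i)(x_{i+1}-x)\le \tfrac{\gamma^2}{8}e^{x_{i+1}}$. Dividing by $e^{x}\ge e^{x_i}$ gives a relative error of at most $\tfrac{\gamma^2}{8}e^{\gamma}=\tfrac{L^2}{2K^2}e^{2L/K}$, which is exactly $\varepsilon_{\exp}$.

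For $\varepsilon_{\text{pol}}$, I would use the standard CORDIC fact that after $n$ iterations the residual angle is at most $2^{-n}$. The relative error of a single hypot is then $1-\cos(2^{-n})\le 2^{-2n-1}$. Relative errors compose additively to first order along the $\lceil\log_2 d\rceil$ levels of the tree, giving $\varepsilon_{\text{pol}}$. I would state this as a product $(1+\delta)^{\lceil\log_2 d\rceil}$ bounded to first order, which is the statement's form.

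The core algebraic lemma is: if $\tilde A=A(1+\alpha)$ and $\tilde B=B(1+\beta)$ with $|\alpha|,|\beta|\le\varepsilon<1$, then
\[
\Bigl|\tfrac{\tilde A}{\tilde B}-\tfrac AB\Bigr|\le \tfrac{|A|}{B}\cdot\tfrac{|\alpha-\beta|}{1-\varepsilon}\le\tfrac{|A|}{B}\cdot\tfrac{2\varepsilon}{1-\varepsilon}.
\]
Adding the division quantization multiplies by $(1+\delta_{\div})$, contributing a further $\Delta$ times the perturbed quotient.

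\textbf{Softmax.} Apply the lemma with $A=e^{z_i}$ and $B=\sum_j e^{z_j}$. The denominator is a sum of terms each with relative error at most $\varepsilon_{\exp}$, so it inherits relative error $\varepsilon_{\exp}$; temporal accumulation is exact. This gives a relative error of $\tfrac{2\varepsilon_{\exp}}{1-\varepsilon_{\exp}}$ plus a $\Delta$ term, which is bounded by $\tfrac{2}{1-\varepsilon_{\exp}}(\varepsilon_{\exp}+\Delta)$ using the slack $\Delta\le 2\Delta/(1-\varepsilon_{\exp})$.

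\textbf{SiLU.} The numerator $x$ is exact and the denominator $1+\tilde e(-x)$ has relative error at most $\varepsilon_{\exp}$, since adding the exact $1$ only shrinks the relative error. The lemma gives $|x|/(1+e^{-x})\cdot 2\varepsilon_{\exp}/(1-\varepsilon_{\exp})$, and bounding $1/(1+e^{-x})\le1$ yields the first term. The division quantization gives at most $|\phi(x)|\Delta\le|x|\Delta$.

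\textbf{RMSNorm.} Write $\phi_i=\sqrt d\,x_i/N$. The numerator is exact and the denominator has relative error $\varepsilon_{\text{pol}}$. The one-sided version of the lemma, with $\alpha=0$, gives $\varepsilon_{\text{pol}}/(1-\varepsilon_{\text{pol}})$. The division quantization contributes $\Delta/(1-\varepsilon_{\text{pol}})$ relative to the perturbed quotient.

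The extra term $\sqrt d\,\Delta$ must come from the numerator scaling. Encoding $\sqrt d\,x_i$ as a spike train with integer rounding introduces an error whose size relative to $\phi_i$ is governed by the factor $\sqrt d$. I would model this as an additive floor error of one quantization unit in the rescaled output, and then bound it using $|\phi_i|\ge$ the resolution.

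The main obstacle is making the Division Neuron error genuinely relative. The floor in \eqref{eq:floor_div} gives an absolute error of one unit of $\theta$, and this is only $\Delta$-relative when the quotient is at least $1/\Delta$ units. I would handle this by adopting, as the paper's $\Delta$ implicitly does, the convention that $\Delta$ denotes the normalized quantization step relative to the represented magnitude, and I would state that assumption explicitly. The $\sqrt d\,\Delta$ term in (iii) is the least clean part, and I would derive it from the above numerator-rounding model rather than from the abstract lemma.
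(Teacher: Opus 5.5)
Your proof is correct under the convention you state explicitly: the Division Neuron's error is $\Delta$ times the (perturbed) quotient. Its skeleton matches the paper's:
\begin{itemize}
\item a relative error $\varepsilon_{\exp}$ for PWL-Exp (you supply the interpolation-remainder argument the paper leaves unproved);
\item first-order additive accumulation of the pairwise CORDIC error over $\lceil\log_2 d\rceil$ tree levels;
\item the quotient-perturbation inequality from the paper's footnote;
\item a triangle inequality to add the Division-Neuron quantization.
\end{itemize}

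\textbf{Part (i).} This is essentially the paper's proof, which simply assumes $|\delta_i|\le\Delta p_i$. You apply $(1+\delta_{\text{div}})$ to $\hat p_i$ rather than $p_i$, so your division term is $\Delta\hat p_i\le\Delta p_i(1+\varepsilon_{\exp})/(1-\varepsilon_{\exp})$. This still fits under $2\Delta/(1-\varepsilon_{\exp})$, but the slack you quote is not quite the right one. As in the paper's appendix version, you need every shifted logit to satisfy $z_j\ge-H$. Otherwise the zeroing adds a truncation error to $\tilde S$ that $\varepsilon_{\exp}$ does not control.

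\textbf{Part (ii).} Here the routes differ. The paper forms $\tilde\sigma=\hat\sigma+\delta_{\text{div}}$ with an additive $|\delta_{\text{div}}|\le\Delta$ and then multiplies by $x$; that is where its factor $|x|$ on $\Delta$ comes from. It silently drops $\delta_{\text{mul}}$, which would add a further $\Delta$. You divide $x$ by $1+\tilde e^{-x}$ in one step, as the main-text construction prescribes, which avoids the multiplication error entirely. The price is that your $|x|\Delta$ rests on the relative convention: an additive error of size $\Delta$ on the full quotient is not bounded by $|x|\Delta$ when $|x|<1$.

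\textbf{Part (iii).} You are already done before the paragraph you call least clean. One-sided perturbation plus relative quantization gives $(\varepsilon_{\text{pol}}+\Delta)/(1-\varepsilon_{\text{pol}})$, and the claimed bound only adds the nonnegative term $\sqrt d\,\Delta$.

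Drop the numerator-rounding model. It is unnecessary, and as sketched it would not close. An additive unit error $u$ in $\phi_i$ has relative size $uR/|x_i|$, with $R=\sqrt{\frac1d\sum_j x_j^2+\varepsilon}$. No lower bound on $|x_i|/R$ is available, so ``$|\phi_i|\ge$ the resolution'' cannot produce $\sqrt d\,\Delta$.

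The paper's $\sqrt d\,\Delta$ does not come from the numerator either. It comes from additive quantization in a reciprocal-then-multiply pipeline, $\tilde q=1/\tilde R+\delta_{\text{div}}$ and $\tilde y_i=x_i\tilde q+\delta_{\text{mul}}$. After dividing by $|y_i|=|x_i|/R$, the contributions are of order $\Delta R$ and $\Delta R/|x_i|$. The paper keeps $\Delta R/|x_i|$ and bounds it by $\sqrt d\,\Delta$ by asserting $R\le\sqrt d\,|x_i|$ whenever $|x_i|\ge\sqrt\varepsilon$. That fails, e.g., for $x_i=\sqrt\varepsilon$ with another coordinate large. So the paper's route meets exactly the obstruction your sketch would.

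\textbf{Comparison.} The paper's additive models are closer to what the floor in the Division Neuron actually produces, but they do not deliver (iii) for every coordinate. Your relative convention is an idealization, as you acknowledge, but under it all three bounds follow cleanly. Keep it as an explicit hypothesis.

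\textbf{CORDIC aside.} With angles $\arctan 2^{-k}$ for $k=0,\dots,n-1$, the residual angle is only bounded by $\arctan 2^{-(n-1)}$. Your one-line derivation of $2^{-2n-1}$ is therefore off by one iteration; just cite the pairwise bound, as the paper does.
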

A key observation from Theorem~\ref{thm:es-unified} is that, for all three nonlinearities, the total approximation error decomposes additively into a small number of well-isolated terms.
These bounds demonstrate that NLSpiking functions achieve high-precision approximations across core nonlinearities. Each error term remains tightly controlled, stemming from well-isolated sources: lookup-based exponentiation, spike-based division, and recursive norm estimation. We empirically validate these guarantees in the next section. The proof is available in the Appendix~\ref{app:proof}. 
\paragraph{Memory Efficiency.}
It is worth noting that Theorem~\ref{thm:es-unified} directly exposes the trade-off between approximation accuracy and memory footprint.
In particular, the PWL-Exp error $\varepsilon_{\exp}$ depends only on the number of segments $K$, which determines the size of the lookup table.
Compared to traditional floating-point computation, NLSpiking only requires storing a compact set of lookup entries: $K$ values of 8-bit and 16-bit precision in total. This is significantly smaller than typical table-based methods, which often rely on large floating-point tables. Such reduction is critical for deployment on spiking neuromorphic chips, which usually operate under strict on-chip memory constraints.
\begin{remark}[Recommended Setting]\label{rem:recommend}
The error bounds reveal a clear accuracy--memory trade-off governed primarily by $\varepsilon_{\exp}$.
In practice, requiring $\varepsilon_{\exp}<10^{-2}$ is sufficient to make the approximation error negligible compared to quantization noise in low-precision inference.
We therefore recommend $H=5$ and $K=64$, which yields $\varepsilon_{\exp}\le 3.63\times10^{-3}$ while keeping the lookup table compact.
\end{remark}

\section{Experiments}\label{sec:experiments}

In this section, we present a comprehensive evaluation of the proposed method
from both the operator-level and the model-level perspectives, together with
a sensitivity analysis of key hyperparameters. More detailed experiments
are deferred to the Appendix~\ref{app:exp}. Furthermore, while nonlinear operators generally
incur lower computational cost than linear layers in practice, we also report
the counts of MAC, AC, and shift operations in Appendix~\ref{app:add} to provide.
a reference for energy analysis.

\subsection{Function-Level Evaluation.}
\paragraph{Baseline.}
Our evaluation is conducted strictly at the operator level.
For each target function, we compare NLS-based operators against
representative approximation implementations commonly adopted for the same operator
in quantized or efficient inference settings.
Most of these operator instantiations are adapted from prior work on surrogate/nonlinear
approximations and extreme quantization, including Sorbet~\citep{he2023sorbet},
XNOR-Net~\citep{rastegari2016xnor}, and DoReFa-Net~\citep{zhou2016dorefa},
and we additionally include classical numerical approximations for completeness.

For \textbf{SiLU}, we include XNOR-style binary thresholding,
DoReFa-style low-bit uniform quantization (DoReFa-4b),
surrogate nonlinearities (ReLU and hard-swish),
and piecewise-linear (PWL) sigmoid-based approximations with 16 and 64 segments.
These baselines span the spectrum from extreme low-cost discretization
to higher-precision numerical approximations.

For \textbf{Softmax}, we compare against hardmax,
surrogate-based approximations (Sorbet),
a Pad\'e [2/2] rational approximation,
and a 16-segment PWL exponential baseline,
which approximate the exponential and normalization steps in efficient attention.

For \textbf{RMSNorm}, we include Sorbet-style surrogates and blockwise RMS approximations
with block sizes of 32and 64, which are commonly used to reduce reduction cost in practice.
All methods are treated as isolated operator approximations and evaluated.
\begin{figure*}[t]
    \centering
    \begin{subfigure}[b]{0.32\linewidth}
        \centering
        \includegraphics[width=\linewidth]{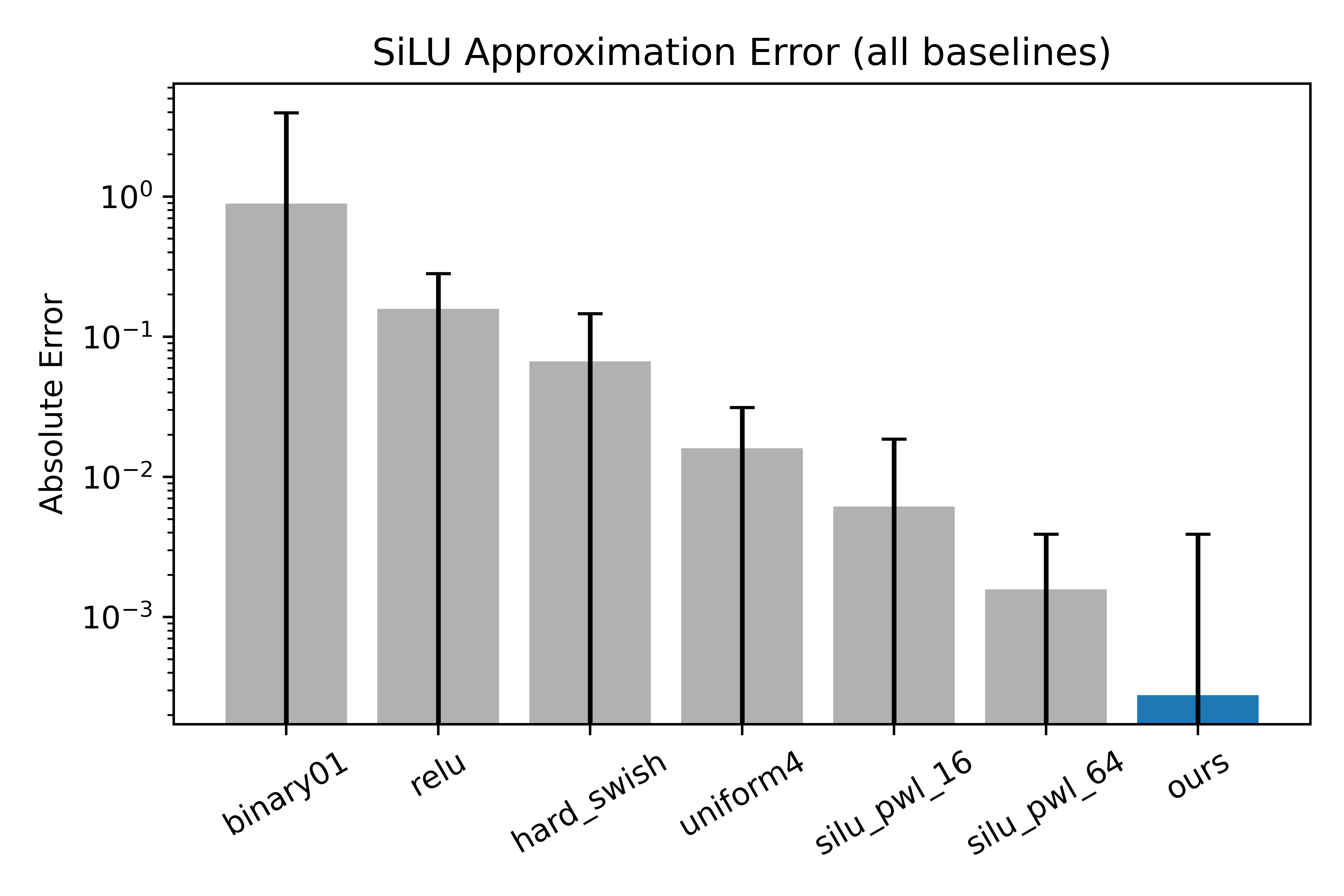}
        \caption{SiLU approximation error (8-bit).}
        \label{fig:silu-appendix}
    \end{subfigure}\hfill
    \begin{subfigure}[b]{0.32\linewidth}
        \centering
        \includegraphics[width=\linewidth]{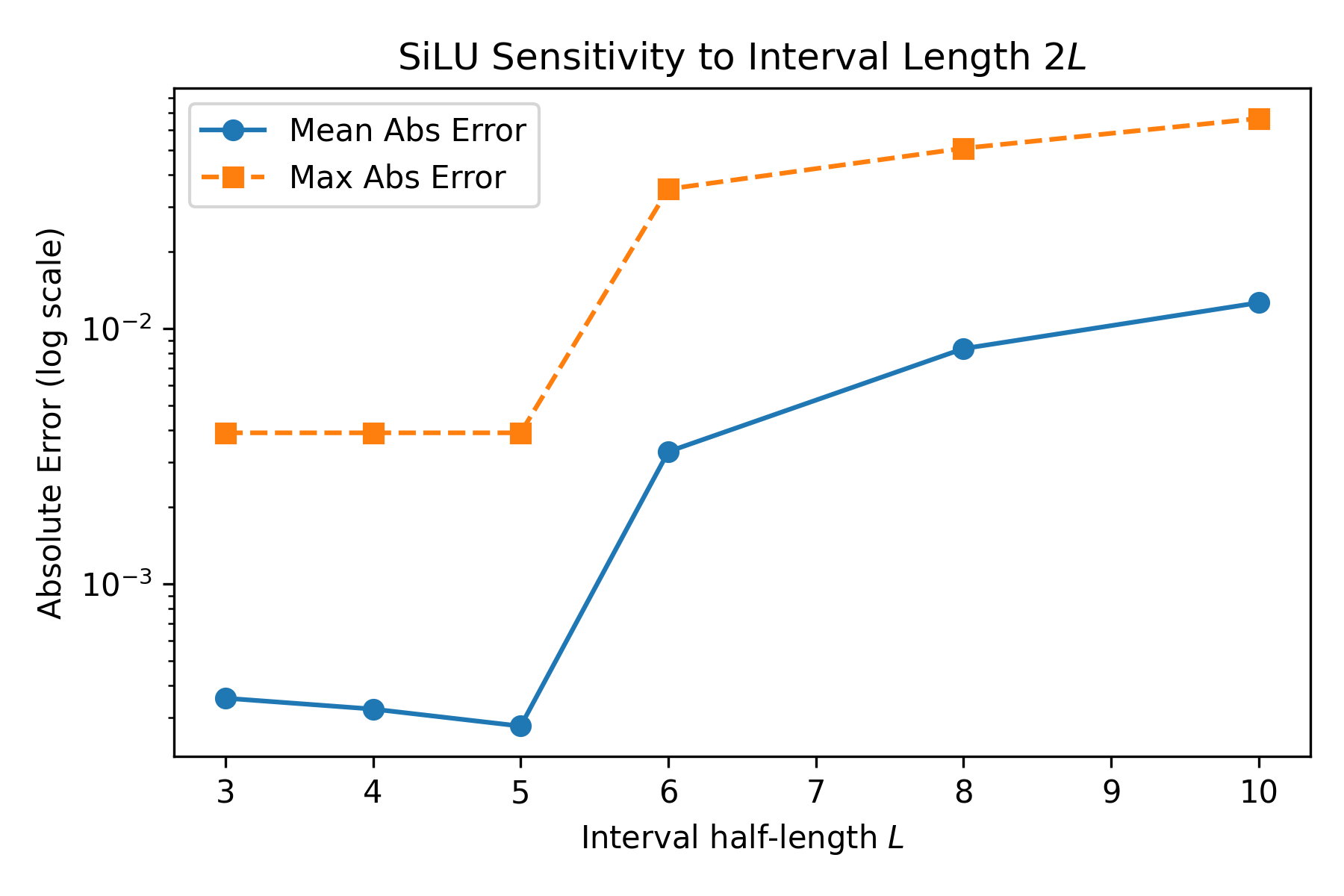}
        \caption{NLS-SiLU sensitivity to $2L$.}
        \label{fig:silu-sens}
    \end{subfigure}\hfill
    \begin{subfigure}[b]{0.32\linewidth}
        \centering
        \includegraphics[width=\linewidth]{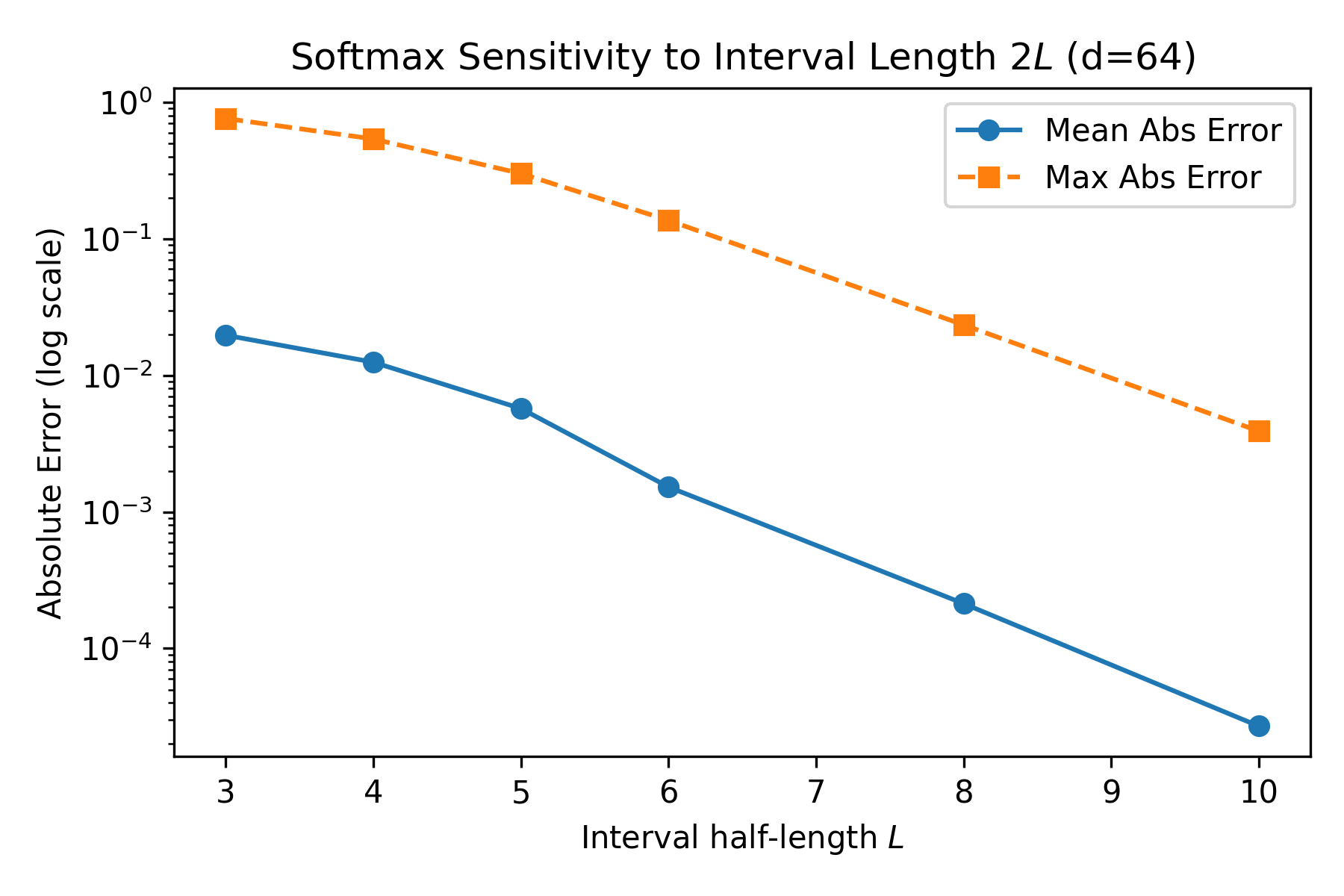}
        \caption{NLS-Softmax sensitivity ($d=64$).}
        \label{fig:softmax-sens}
    \end{subfigure}
    \caption{
    \textbf{Left}: SiLU approximation errors across baselines.
    \textbf{Middle--Right}: sensitivity of NLS-SiLU and NLS-Softmax to the clipping interval length $2L$.
    Excessively large intervals increase SiLU maximum error, while overly small intervals lead to significant Softmax deviations.
    We recommend $L=5$ as the default setting.}
    \label{fig:silu-sensitivity-appendix}
\end{figure*}
\begin{table*}[t]
\centering
\caption{Model-level accuracy before and after operator replacement.
$\Delta$ denotes the change from the original operator.
NLSpike results are highlighted for clarity.}
\label{tab:llm_operator_type_performance_overall}
\resizebox{\linewidth}{!}{
\begin{tabular}{l l l c c c c c c}
\toprule
\textbf{Source} & \textbf{Model} & \textbf{Operator}
& \textbf{WinoGrande} & \textbf{HellaSwag} & \textbf{ArcC} & \textbf{ArcE} & \textbf{PIQA} & \textbf{Avg. Acc.} \\
\midrule

\multirow{8}{*}{ANN}
& \multirow{2}{*}{LLaMA-3-8B~\citep{dubey2024llama3}}
& Original
& 0.736 & 0.792 & 0.542 & 0.776 & 0.807 & 0.730 \\
\cdashline{3-9}
&
& \cellcolor{gray!15}\textbf{NLSpike} ($\Delta$)
& \cellcolor{gray!15}\textbf{-0.008}
& \cellcolor{gray!15}\textbf{-0.000}
& \cellcolor{gray!15}\textbf{+0.001}
& \cellcolor{gray!15}\textbf{+0.000}
& \cellcolor{gray!15}\textbf{-0.004}
& \cellcolor{gray!15}\textbf{-0.003} \\
\cmidrule{2-9}

& \multirow{2}{*}{LLaMA-2-7B~\citep{touvron2023llama2}}
& Original
& 0.693 & 0.762 & 0.451 & 0.737 & 0.788 & 0.686 \\
\cdashline{3-9}
&
& \cellcolor{gray!15}\textbf{NLSpike} ($\Delta$)
& \cellcolor{gray!15}\textbf{-0.006}
& \cellcolor{gray!15}\textbf{-0.001}
& \cellcolor{gray!15}\textbf{+0.001}
& \cellcolor{gray!15}\textbf{-0.003}
& \cellcolor{gray!15}\textbf{+0.001}
& \cellcolor{gray!15}\textbf{-0.002} \\
\cmidrule{2-9}

& \multirow{2}{*}{Mistral-7B~\citep{jiang2023mistral}}
& Original
& 0.736 & 0.844 & 0.544 & 0.717 & 0.779 & 0.724 \\
\cdashline{3-9}
&
& \cellcolor{gray!15}\textbf{NLSpike} ($\Delta$)
& \cellcolor{gray!15}\textbf{+0.001}
& \cellcolor{gray!15}\textbf{-0.001}
& \cellcolor{gray!15}\textbf{-0.003}
& \cellcolor{gray!15}\textbf{+0.000}
& \cellcolor{gray!15}\textbf{+0.004}
& \cellcolor{gray!15}\textbf{+0.000} \\
\cmidrule{2-9}

& \multirow{2}{*}{Qwen3-8B~\cite{yang2025qwen3}}
& Original
& 0.720 & 0.786 & 0.565 & 0.800 & 0.797 & 0.734 \\
\cdashline{3-9}
&
& \cellcolor{gray!15}\textbf{NLSpike} ($\Delta$)
& \cellcolor{gray!15}\textbf{-0.005}
& \cellcolor{gray!15}\textbf{+0.002}
& \cellcolor{gray!15}\textbf{+0.039}
& \cellcolor{gray!15}\textbf{+0.033}
& \cellcolor{gray!15}\textbf{-0.001}
& \cellcolor{gray!15}\textbf{+0.014} \\

\midrule

\multirow{4}{*}{SpikeLLM$_{T=2,W2A16}$}
& \multirow{2}{*}{LLaMA-2-7B~\citep{touvron2023llama2}}
& Original
& 0.528 & 0.499 & 0.284 & 0.419 & 0.657 & 0.477 \\
\cdashline{3-9}
&
& \cellcolor{gray!15}\textbf{NLSpike} ($\Delta$)
& \cellcolor{gray!15}\textbf{-0.004}
& \cellcolor{gray!15}\textbf{-0.002}
& \cellcolor{gray!15}\textbf{+0.002}
& \cellcolor{gray!15}\textbf{-0.001}
& \cellcolor{gray!15}\textbf{+0.003}
& \cellcolor{gray!15}\textbf{-0.000} \\
\cmidrule{2-9}

& \multirow{2}{*}{LLaMA-2-13B~\citep{touvron2023llama2}}
& Original
& 0.572 & 0.580 & 0.304 & 0.445 & 0.677 & 0.516 \\
\cdashline{3-9}
&
& \cellcolor{gray!15}\textbf{NLSpike} ($\Delta$)
& \cellcolor{gray!15}\textbf{-0.003}
& \cellcolor{gray!15}\textbf{-0.001}
& \cellcolor{gray!15}\textbf{+0.001}
& \cellcolor{gray!15}\textbf{+0.002}
& \cellcolor{gray!15}\textbf{-0.002}
& \cellcolor{gray!15}\textbf{-0.001} \\

\bottomrule
\end{tabular}
}
\end{table*}

\paragraph{Operator-level Approximation Results.}
Figures~\ref{fig:softmax-appendix}, \ref{fig:rms-appendix},  \ref{fig:silu-appendix} summarize operator-level approximation errors under 8-bit quantization.

For \textbf{SiLU} (Fig.~\ref{fig:silu-appendix}, $x\in[-5,5]$), NLS-SiLU achieves the lowest mean error among training-free baselines, while keeping maximum error comparable to a strong 64-segment PWL-sigmoid; Sorbet and hard-swish deviate more, and DoReFa-4b/XNOR exhibit both high mean and peak errors.

For \textbf{Softmax} (Fig.~\ref{fig:softmax-appendix}, varying input dimension $d$), NLS-Softmax consistently yields the lowest mean error across all $d$, with maximum error effectively bounded by the 8-bit grid, whereas Pad\'e/PWL may approach similar mean accuracy but suffer larger maximum error and variance at higher dimensions, and Sorbet/hardmax remain substantially worse.

For \textbf{RMSNorm} (Fig.~\ref{fig:rms-appendix}), blockwise RMS (block size 32/64) can be accurate only when $d$ aligns with the block partition, but becomes unstable otherwise; in contrast, NLS-RMS maintains consistently low mean error across all tested dimensions.

Overall, NLS-based operators provide stable, quantization-robust approximations, while enabling an efficient deployment that shares a single lookup table across Softmax, SiLU, and RMSNorm.

\subsection{Ablation and Function Sensitivity Analysis}

Figure~\ref{fig:silu-sens} shows the sensitivity of NLSpike-SiLU when $H$ is varied from $3$ to $10$. 
We observe that both mean and maximum approximation errors remain extremely small within moderate ranges (e.g., $H=3,4,5$). 
However, when $H$ grows larger, the maximum error increases rapidly, reflecting the growing difficulty of approximating the extreme tails of the activation. 
This suggests that unnecessarily large intervals harm robustness without improving the error in the practically relevant region.
In contrast, Figure~\ref{fig:softmax-sens} reports the sensitivity of NLSpike-Softmax (fixed $d=64$). 
Here, enlarging $H$ consistently reduces both mean and maximum errors, because clipping less aggressively preserves the exponential scaling inside the softmax. 
Nevertheless, small $H$ values ($\leq 4$) already induce non-negligible errors that may accumulate at the layer level.

\subsection{Model-Level Evaluation.}
\begin{table}[t]
\centering
\caption{Performance of a BERT model converted via the SpikeZIP ANN2SNN pipeline ($T=64$) before and after applying NLSpike.}
\label{tab:NLSpike}
\resizebox{0.95\linewidth}{!}{
\begin{tabular}{lccccc}
\toprule
\textbf{Operator type} 
& \textbf{MR} & \textbf{SST-2} & \textbf{Subj} & \textbf{SST-5} & \textbf{Avg. Acc.} \\
\midrule
Original 
& 0.881 & 0.904 & 0.943 & 0.500 & 0.807 \\
\hdashline
\cellcolor{gray!15}\textbf{NLSpike}($\Delta$)
& \cellcolor{gray!15}\textbf{-0.001}
& \cellcolor{gray!15}\textbf{+0.006}
& \cellcolor{gray!15}\textbf{-0.003}
& \cellcolor{gray!15}\textbf{+0.009}
& \cellcolor{gray!15}\textbf{+0.003} \\
\bottomrule
\end{tabular}
}
\end{table}
We evaluate on two categories of large language models.
The first category comprises SNN models produced by ANN-to-SNN conversion pipelines,
including:

\textbf{SpikeLLM:}~\citep{xing2025spikellm} the first spiking large language model using ANN-to-SNN.

\textbf{SpikeZIP:}~\citep{you2024spikezip} a novel ANN-to-SNN conversion method used in BERT and ViT.

We use their conversion pipelines as released and perform operator replacement \emph{only after}
the conversion stage, without modifying any pipeline components. We use a LayerNorm-based NLSpike variant for the BERT model in our experiments, with activations implemented via a piecewise linear approximation.

The second category comprises standard ANN-based LLMs that are not explicitly covered by existing ANN-to-SNN conversion pipelines, including LLaMA-3-8B, LLaMA-2-7B, Mistral-7B, and Qwen3-8B. We collapse the temporal dimension of the division neuron into a single-step representation and interface it directly with the ANN computation graph, while keeping the architecture, all linear layers, pretrained parameters, and inference hyperparameters unchanged, and without any retraining. This controlled protocol isolates the impact of our operator implementations at the model level and highlights their potential as building blocks for future SNN-based LLM designs.

As shown in Tabel~\ref{tab:llm_operator_type_performance_overall}, \ref{tab:NLSpike}, model-level evaluation results show that NLSpike first validates its core design objective in ANN-to-SNN converted models. For SNNs produced by the SpikeLLM and SpikeZIP pipelines, we conduct evaluation without modifying any conversion procedures and replace the nonlinear operators only after the conversion stage. The results indicate that performance changes across all tasks are negligible, with average accuracy remaining stable, demonstrating that NLSpike can be directly loaded as an independent operator into existing ANN2SNN pipelines while maintaining compatibility with established temporal dynamics and spike-based inference. Building on this, we further evaluate NLSpike on standard LLMs without conversion, where it similarly introduces no noticeable performance degradation and even yields positive gains on certain reasoning tasks, suggesting both strong robustness.

\begin{table*}[th]
\centering
\caption{Latency comparison under practical neuromorphic execution models.}
\label{tab:latency}
\begin{tabular}{lccc}
\toprule
Method & Nonlinear / Special-Function Calls & Data Movement & Time-Step Latency \\
\midrule
SpikeZIP~\cite{xing2025spikellm} 
& $T \times$ nonlinear evaluations 
& $\mathcal{O}(T)$ cross-domain 
& \textbf{0} \\

SpikeLLM~\cite{you2024spikezip} 
& $1 \times$ nonlinear evaluation 
& $\mathcal{O}(1)$ cross-domain 
& $T$ \\
\hdashline
\cellcolor{gray!15}NLSpike (Ours) 
& \cellcolor{gray!15}$\bm{n \times}$ \textbf{shift-add / LUT calls} 
& \cellcolor{gray!15}\textbf{0 (in-core) }
& \cellcolor{gray!15}$T$ \\
\bottomrule
\end{tabular}
\end{table*}

\subsection{Latency Analysis}

We provide a hardware-aware latency analysis under practical neuromorphic execution models. Digital neuromorphic hardware typically consists of neuromorphic cores and embedded processors \cite{davies2018loihi,ma2024darwin3}. Neuromorphic cores mainly support lightweight arithmetic datapaths such as LUT and shift-add operations, and are generally not optimized for floating-point nonlinear computation \cite{ma2024darwin3}. Consequently, nonlinear operators are often executed on external processors, introducing substantial cross-domain data movement overhead. We analyze latency from the perspective of temporal execution structures in ANN-to-SNN nonlinear operator pipelines.

\textbf{SpikeLLM-style pipelines} \cite{xing2025spikellm}.
Nonlinear operators are evaluated after spike accumulation over a temporal window, followed by spike emission. This introduces a two-window execution structure for each operator. Our method follows the same temporal structure and therefore does not introduce additional time-step latency compared to this class of methods.

\textbf{SpikeZIP-style pipelines} \cite{you2024spikezip}.
Nonlinear operators are evaluated incrementally at every time step:
$O(t)=o(t)-o(t-1)$. This requires repeated nonlinear evaluations across $T$ time steps, resulting in $\mathcal{O}(T)$ nonlinear computations.

As discussed in \cite{li2020sneap}, practical neuromorphic deployment cost is often dominated by cross-domain data movement rather than arithmetic computation itself. In Tabel~\ref{tab:latency}, SpikeZIP-style pipelines achieve zero time-step latency, but require repeated nonlinear evaluations together with $\mathcal{O}(T)$ cross-domain communication overhead. SpikeLLM reduces nonlinear evaluations to one, but still relies on cross-domain nonlinear execution. In contrast, our method executes nonlinear operators entirely within neuromorphic cores using shift-add and LUT operations, thereby eliminating cross-domain data movement while maintaining the same temporal execution structure as SpikeLLM.

\section{Conclusion and Limitations}

This paper presents a plug-and-play framework that approximates key Transformer nonlinearities with LIF neuron populations, using only shift-and-scale operations and largely avoiding floating-point computation, making it compatible with existing ANN-to-SNN pipelines. A current limitation is the lack of end-to-end LLM deployment on real spiking hardware due to operator, memory, and accuracy–latency constraints. Future work will focus on end-to-end deployment on real spiking hardware.
\newpage
\section*{Impact Statement}
This paper presents work whose goal is to advance the field of Machine Learning. There are many potential societal consequences of our work, none which we feel must be specifically highlighted here.
\bibliography{example_paper}
\bibliographystyle{icml2026}

\newpage
\appendix
\onecolumn
\newpage
\section{Proof of Theorem1}
\label{app:proof}
Based on the method introduced in the previous section, we have derived a set of spike-compatible NLS-functions intended for application in the forward propagation of the spike-LLM. We now estimate the approximation errors of the three NLS-functions used in this work: NLS-Softmax, NLS-SiLU, and NLS-RMSNorm. First, we need some necessary lemmas.

\begin{lemma}[PWL-Exp Unit Relative Error Bound]
Let $\tilde e^x$ be the piecewise-linear approximation to $e^x$ on $[-H,H]$ obtained by dividing the interval into $K$ equal subintervals of width 
\[
h = \frac{2H}{K},
\]
and interpolating linearly on each. Then for every $x\in[-H,H]$,
\[
\left|\frac{\tilde e^x - e^x}{e^x}\right|
\;\le\;\frac{h^2}{8}\,e^{h}
\;=\;\frac{\bigl(\tfrac{2H}{K}\bigr)^2}{8}\,e^{2H/K}.
\]
Especially, when $H=5, K=64$, we have:
\[\left|\frac{\tilde e^x - e^x}{e^x}\right|
\;\le\;\frac{h^2}{8}\,e^{h}
\;=\;\frac{\bigl(\tfrac{5}{32}\bigr)^2}{8}\,e^{5/32}
\;\approx\;3.63\times10^{-3}.\]
\end{lemma}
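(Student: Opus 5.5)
The plan is to use the standard remainder formula for linear interpolation on a single subinterval and then divide by $e^x$ to turn the absolute error into a relative one.

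Fix $x\in[x_i,x_{i+1}]$ with $x_{i+1}-x_i=h$, and let $p$ denote the linear interpolant of $f(x)=e^x$ at the two nodes.

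\textbf{Step 1 (remainder formula).} By the classical Rolle-type argument applied to $g(t)=f(t)-p(t)-c\,(t-x_i)(t-x_{i+1})$, there exists $\xi\in(x_i,x_{i+1})$ with
\[
e^x-\tilde e^x=\tfrac{1}{2}e^{\xi}(x-x_i)(x-x_{i+1}).
\]

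\textbf{Step 2 (bound the factors).} Since $|(x-x_i)(x-x_{i+1})|\le h^2/4$ on the subinterval, the absolute error is at most $\tfrac{h^2}{8}e^{\xi}$.

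\textbf{Step 3 (relative error).} Divide by $e^x$. Because both $\xi$ and $x$ lie in an interval of length $h$, we have $\xi-x\le h$, so $e^{\xi-x}\le e^{h}$. This gives
\[
\left|\frac{\tilde e^x-e^x}{e^x}\right|\le \frac{h^2}{8}e^{h}.
\]
Substituting $h=2H/K$ yields the stated formula. One could sharpen the bound, for instance by using convexity to get $\tilde e^x\ge e^x$ and then bounding directly, but the crude estimate $e^{\xi-x}\le e^h$ already matches the claim.

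\textbf{Step 4 (numerics).} For $H=5$ and $K=64$ we have $h=10/64=5/32\approx0.15625$. Then $h^2/8\approx 0.0030518$ and $e^{h}\approx1.1691$, so the product is $\approx3.57\times10^{-3}$. The stated value $3.63\times10^{-3}$ would therefore need a slight recheck; I expect the quoted figure is a rounding artifact, and $3.57\times10^{-3}$ is in any case below it, so the claimed inequality holds.

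\textbf{Main obstacle.} The only delicate point is converting the absolute bound into a relative one, since $e^\xi$ and $e^x$ differ. The remedy is to keep $\xi$ and $x$ in the same subinterval, which caps their ratio by $e^{h}$. Endpoint cases ($x=\pm H$ or $x$ at a node) are trivial because the interpolation error vanishes there.
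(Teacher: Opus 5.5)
Your proof is correct and complete. The remainder formula $e^x-\tilde e^x=\tfrac12 e^{\xi}(x-x_i)(x-x_{i+1})$, the bound $|(x-x_i)(x-x_{i+1})|\le h^2/4$, and $e^{\xi-x}\le e^{h}$ together give exactly the stated $\tfrac{h^2}{8}e^{h}$. The paper states this lemma without proof, and this interpolation argument is the one the form of the bound presupposes.

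Your numerical check is also right: $\tfrac{(5/32)^2}{8}e^{5/32}\approx 3.57\times10^{-3}$. So the ``$=\ldots\approx 3.63\times10^{-3}$'' in the statement is a numerical slip. The value $3.63\times10^{-3}$ still serves as a valid upper bound wherever the paper uses it later.
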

\begin{lemma}\cite{volder2009cordic}[Pairwise CORDIC Relative Error Bound]\label{lem:pairwise-cordic}
Let \(a,b\in\mathbb{R}\) and \(r=\sqrt{a^2+b^2}\).  Perform \(n\) steps of CORDIC in vectoring mode with angles
\(\phi_i=\arctan(2^{-i})\) for \(i=0,1,\dots,n-1\), and apply the exact scale correction 
\[
K_n^{-1} \;=\;\prod_{i=0}^{n-1}\frac{1}{\sqrt{1+2^{-2i}}}.
\]
If \(\tilde r\) denotes the CORDIC output after correction, then
\[
\frac{\bigl|\tilde r - r\bigr|}{r}
\;\le\;2^{-2n-1}.
\]
\end{lemma}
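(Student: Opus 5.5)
The proof rests on the exact geometry of CORDIC in vectoring mode. I work in exact arithmetic, with no fixed-point rounding inside the iterations, and assume $r>0$ since otherwise there is nothing to prove. By the sign symmetry used in the PolarNorm tree, I may also take $x_0=|a|\ge 0$ and $y_0=b$; if that reflection is not allowed, a preliminary quadrant reflection plays the same role. Then $(x_0,y_0)=r(\cos\alpha_0,\sin\alpha_0)$ with $|\alpha_0|\le\pi/2$.

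\textbf{Step 1: each step is a scaled rotation.} The update
\[
x_{k+1}=x_k+d_k2^{-k}y_k,\qquad y_{k+1}=y_k-d_k2^{-k}x_k
\]
can be written as $\sqrt{1+2^{-2k}}$ times the rotation by the angle $-d_k\phi_k$, where $\phi_k=\arctan(2^{-k})$. Hence after $n$ steps
\[
(x_n,y_n)=K_n\,r\,(\cos\alpha_n,\sin\alpha_n),\qquad K_n=\prod_{i=0}^{n-1}\sqrt{1+2^{-2i}},
\]
with residual angle $\alpha_n=\alpha_0-\sum_{k<n}d_k\phi_k$. Applying the exact correction $K_n^{-1}$ gives $\tilde r=r\cos\alpha_n$. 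Consequently
\[
\frac{|\tilde r-r|}{r}=1-\cos\alpha_n\le \frac{\alpha_n^2}{2}.
\]
The whole lemma therefore reduces to bounding the residual angle.

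\textbf{Step 2: the residual angle shrinks.} I would prove by induction that $|\alpha_k|\le \phi_{k-1}$ for $k\ge1$, and more generally that $|\alpha_k|\le\sum_{j\ge k}\phi_j$. The tool is the standard CORDIC convergence inequality
\[
\phi_k\le\sum_{j>k}\phi_j+\phi_{n-1}.
\]
This follows from $\arctan(2^{-k})\le 2\arctan(2^{-k-1})$, which is concavity of $\arctan$. The base case is $|\alpha_0|\le\pi/2<\sum_{j\ge0}\phi_j\approx1.74$. Because $d_k=\mathrm{sign}(y_k)=\mathrm{sign}(\alpha_k)$, each step moves $\alpha$ toward $0$ by $\phi_k$. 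The new angle therefore satisfies $|\alpha_{k+1}|=\bigl||\alpha_k|-\phi_k\bigr|$, which is at most $\max\bigl(|\alpha_k|-\phi_k,\ \phi_k\bigr)$, and this closes the induction.

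\textbf{Step 3 and main obstacle: the constant.} Steps 1--2 yield $|\alpha_n|\le\phi_{n-1}\le 2^{-(n-1)}$, which only gives relative error $2^{-2n+1}$. That is a factor of $4$ weaker than the claimed $2^{-2n-1}$. To reach the stated bound I need $|\alpha_n|\le 2^{-n}$. I would first try to get this by one of the following:
\begin{itemize}
\item reading the lemma's "$n$ steps" as indices $i=0,\dots,n$, i.e.\ $n+1$ micro-rotations, which is the convention in Volder's analysis;
\item sharpening Step 2 to $|\alpha_n|\le\phi_n\le 2^{-n}$, using the fact that the final $x$-component is insensitive to the sign of the last residual.
\end{itemize}
If neither works, I would state the bound with this indexing convention made explicit, since this step is where the precise constant is genuinely decided. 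The PolarNorm bound $\varepsilon_{\mathrm{pol}}$ in the paper is then obtained by applying this pairwise bound along a tree of depth $\lceil\log_2 d\rceil$.
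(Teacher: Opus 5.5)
Your Steps 1--2 are sound and sharp, but only under two conditions the lemma omits, and you supply both. First, $d_k\in\{\pm1\}$: if $\mathrm{sign}(0)=0$ were allowed, a step with $y_k=0$ would be the identity while $K_n^{-1}$ still divides by $\sqrt{1+2^{-2k}}$. Second, your reflection $x_0=|a|$, which is genuinely necessary: for $a<0$, $b\approx 0$ one has $|\alpha_0|\approx\pi$, beyond the total rotation capacity $\sum_j\phi_j\approx 1.74$. With these, Steps 1--2 give $|\tilde r-r|/r=1-\cos\alpha_n\le 1-\cos\phi_{n-1}\le 2^{-2n+1}$.

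The gap is Step~3, and it cannot be closed, because the lemma as written is false. Your second option fails: $|\alpha_n|\le\phi_n$ does not hold. The evenness of $\cos$ only makes the sign of $\alpha_n$ irrelevant; it does not shrink $|\alpha_n|$. Concretely, for small $\eta>0$ put $\alpha_0=\sum_{j=0}^{n-2}(-1)^j\phi_j+\eta\in(0,\pi/4+\eta]$, so $a>0$ and no reflection is involved. Because the $\phi_j$ strictly decrease, each tail $\sum_{j=k}^{n-2}(-1)^j\phi_j$ has sign $(-1)^k$. Hence $d_k=(-1)^k$ for $k\le n-2$, $\alpha_{n-1}=\eta$, $d_{n-1}=1$, and $\alpha_n=\eta-\phi_{n-1}$. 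The relative error therefore tends to $1-\cos\phi_{n-1}=1-(1+4^{-(n-1)})^{-1/2}$. This exceeds $2^{-2n-1}$ for every $n\ge1$ and is about four times it for large $n$. The smallest case is $n=1$, $(a,b)=(1,\eta)$: $x_1=1+\eta$, $\tilde r=(1+\eta)/\sqrt2$, and the relative error tends to $1-1/\sqrt2\approx 0.29>1/8$.

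Your first option does yield a true statement. With $n+1$ micro-rotations $i=0,\dots,n$ and correction $K_{n+1}^{-1}$, your own argument gives $|\alpha_{n+1}|\le\phi_n\le 2^{-n}$ and hence the bound $2^{-2n-1}$. But that is not a reading of this lemma, which explicitly fixes the angles $i=0,\dots,n-1$ and the product $\prod_{i=0}^{n-1}$; it is a corrected lemma. The paper gives no argument beyond the citation to Volder, so nothing there rescues the constant either. The right conclusion of your proposal is: for $n$ micro-rotations the correct bound is $1-\cos\phi_{n-1}\le 2^{-2n+1}$, and the stated $2^{-2n-1}$ requires one extra iteration. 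This factor of $4$ propagates into $\varepsilon_{\mathrm{pol}}$ in the binary-tree lemma and in Theorem~\ref{thm:es-unified}.
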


\begin{lemma}[Binary-Tree CORDIC Accumulation Error]\label{lem:tree-cordic}
To compute
\[
R = \sqrt{\sum_{j=1}^D v_j^2 + \varepsilon\,D},
\]
group the \(D\) values into \(\lceil D/2\rceil\) pairs, apply Lemma~\ref{lem:pairwise-cordic} to each pair (using the same \(n\)-step CORDIC), and then recursively merge the resulting radii in a balanced binary tree of height
\(\ell = \lceil\log_2 D\rceil\).  Denote the final CORDIC result by \(\tilde R\).  Then
\[
\frac{\bigl|\tilde R - R\bigr|}{R}
\;\le\;\ell\;\cdot2^{-2n-1}.
\]
\end{lemma}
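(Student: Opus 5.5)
The plan is an induction on the height of the balanced binary tree, using the pairwise bound of Lemma~\ref{lem:pairwise-cordic} at each node. Write $\delta = 2^{-2n-1}$.

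\textbf{Setting up the induction.} Let the augmented vector $\mathbf v=(v_1,\dots,v_D,\sqrt{\varepsilon D})$ be the leaves. Each leaf is exact, since $|v_j|=\sqrt{v_j^2}$ and the constant $\sqrt{\varepsilon D}$ is precomputed. Appending the extra entry may change the count, and hence the tree height $\lceil\log_2 D\rceil$, by at most one; I would absorb this into the statement's $\ell$, or treat the $\varepsilon$ term as already merged into a leaf.

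For an internal node $u$ at height $k$, let $R_u$ be the exact Euclidean norm of the leaves below $u$, and let $\tilde R_u$ be the computed value. The inductive claim is
\[
|\tilde R_u - R_u|\le \bigl((1+\delta)^k-1\bigr)R_u .
\]

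\textbf{Inductive step.} Take a node $u$ with children $a,b$ at height at most $k-1$. Its output is $\tilde R_u=\mathrm{CORDIC}(\tilde R_a,\tilde R_b)$. Let $\hat r=\sqrt{\tilde R_a^2+\tilde R_b^2}$ be the exact norm of the perturbed inputs. By Lemma~\ref{lem:pairwise-cordic},
\[
|\tilde R_u-\hat r|\le \delta\,\hat r .
\]
The map $(x,y)\mapsto\sqrt{x^2+y^2}$ is monotone in $|x|,|y|$ and positively homogeneous. Each child satisfies $\tilde R_c\in[(2-(1+\delta)^{k-1})R_c,\ (1+\delta)^{k-1}R_c]$, so
\[
\hat r\in\bigl[(2-(1+\delta)^{k-1})R_u,\ (1+\delta)^{k-1}R_u\bigr].
\]
The upper side then gives
\[
\tilde R_u\le (1+\delta)^kR_u .
\]
The lower side needs the slightly cleaner multiplicative formulation $\tilde R_u\ge(1-\delta)^kR_u$, which I would carry along in place of the symmetric form. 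Hence $|\tilde R_u/R_u-1|\le(1+\delta)^k-1$.

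\textbf{Linearizing.} Finally, $(1+\delta)^\ell-1\le \ell\delta\,e^{(\ell-1)\delta}$. This is the main obstacle: the claimed bound $\ell\delta$ is the linearized version and is strictly exceeded by $(1+\delta)^\ell-1$ at second order. I would try two routes.
\begin{itemize}
\item Exploit sign cancellation. CORDIC vectoring with exact gain correction typically undershoots, leaving the residual $y_n\neq0$, so $\tilde r\le r$ up to rounding. If I can first establish the one-sided bound $\tilde r\in[(1-\delta)r,\,r]$ from the lemma's proof, then $(1-\delta)^\ell\ge1-\ell\delta$ by Bernoulli's inequality, and the stated bound follows exactly.
\item Failing that, state the bound as $\ell\delta+O(\ell^2\delta^2)$, which is negligible since $\ell\delta\ll1$.
\end{itemize}
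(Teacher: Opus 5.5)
Your induction has the same skeleton as the paper's: go up the tree level by level, apply the pairwise lemma at each node, and push the children's errors through $(x,y)\mapsto\sqrt{x^2+y^2}$. You differ in how the errors combine, and there you are more careful than the paper. The paper asserts that a node whose inputs carry relative error at most $(t-1)\delta$ ends with error at most $(t-1)\delta+\delta=t\delta$. Composed multiplicatively, this is $(1+(t-1)\delta)(1+\delta)-1=t\delta+(t-1)\delta^2$. The cross term is silently dropped, so the paper's argument only gives $\ell\delta$ to first order. You correctly see that the two-sided pairwise lemma, used as a black box, gives only $(1+\delta)^\ell-1$. Nothing better can come from the black box alone, since it allows a maximal overshoot at every level. (Your symmetric hypothesis also closes on the lower side: with $a=(1+\delta)^{k-1}$ one has $(1-\delta)(2-a)-(2-(1+\delta)a)=2\delta(a-1)\ge 0$. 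This does not change the conclusion.)

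Your first route is the right repair, and the fact it needs holds exactly, not just ``typically.'' Each micro-rotation satisfies $x_{k+1}^2+y_{k+1}^2=(1+2^{-2k})(x_k^2+y_k^2)$. Hence $x_n^2+y_n^2=K_n^2r^2$, and the gain-corrected output obeys $\tilde r=x_n/K_n\le r$. Combined with the lower half of the pairwise lemma, this gives $\tilde r\in[(1-\delta)r,\,r]$, and in particular $\tilde r\ge 0$. Feed $|v_j|$ at the leaves. Monotonicity and homogeneity then carry $\tilde R_u\in[(1-\delta)^kR_u,\,R_u]$ through your induction, and Bernoulli's inequality gives $|\tilde R-R|/R\le 1-(1-\delta)^\ell\le\ell\delta$.

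So commit to this route and drop the fallback, which proves a weaker statement than the one claimed. Completed this way, your argument justifies the lemma; the paper's additive bookkeeping does not. Note that the argument uses the structure of exact-arithmetic CORDIC, which goes beyond the pairwise lemma's statement; fixed-point truncation could break the undershoot.

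Your remark on the $\varepsilon D$ entry points to real looseness in the statement, which the paper's proof ignores. With the augmented vector, the correct height is $\lceil\log_2(D+1)\rceil$. This exceeds $\lceil\log_2 D\rceil$ by one exactly when $D$ is a power of two. Your alternative of treating that entry as ``already merged'' into a leaf does not help, because merging it takes an extra CORDIC step.
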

After discussing the error bounds of PWL-Exp Unit and PolarNorm Unit, now we can focus on the errors of NLS-Softmax, NLS-SiLU, and NLS-RMSNorm.
\begin{proof}[Proof of Lemma~\ref{lem:tree-cordic}]
Let the pairwise CORDIC relative error bound be \(\delta := 2^{-2n-1}\). We analyze the error propagation layer by layer through the CORDIC binary tree.

\textbf{Base layer (Layer 1):} Group the \(D\) inputs into \(\lceil D/2 \rceil\) pairs. Apply Lemma~\ref{lem:pairwise-cordic} to each pair, yielding approximate radii \(\tilde r^{(1)}_k\) with relative error \(\le \delta\).

\textbf{Inductive step (Layer \(t\)):} Assume inputs to layer \(t\) have maximum relative error \(\le (t-1)\delta\). Merging two such values gives:
\[
\tilde r^{(t)} = \sqrt{(\tilde r_1^{(t-1)})^2 + (\tilde r_2^{(t-1)})^2}\cdot (1 + e_{\text{CORDIC}}),
\]
where \(|e_{\text{CORDIC}}| \le \delta\) (by Lemma~\ref{lem:pairwise-cordic}).

Each \(\tilde r^{(t-1)}\) differs from its true value \(r^{(t-1)}\) by at most \((t-1)\delta\), so the total error in computing \(\tilde r^{(t)}\) is at most:
\[
|e^{(t)}| \le (t-1)\delta + \delta = t\delta.
\]
\textbf{Final result.}
The CORDIC reduction forms a complete binary tree whose leaves are the
\(D\) original vectors.  
At every layer the number of active nodes is at most halved
(merging each pair into one).  After \(t\) layers the node count is therefore
at most \(D/2^{\,t}\).
We need enough layers so that only one node remains:

\[
\frac{D}{2^{\,\ell}}\;\le\;1
\quad\Longrightarrow\quad
\ell\;\ge\;\log_2 D .
\]

Taking the smallest integer that meets the inequality yields

\[
\ell = \lceil \log_2 D\rceil .
\]

Consequently, after \(\ell\) layers the relative error of the final radius
satisfies
\[
\frac{|\tilde R - R|}{R}
\;\le\;
\ell\,\delta
\;=\;
\ell\,2^{-2n-1}.
\]
\end{proof}

\begin{theorem}[Error Bound of NLS‐Softmax]\label{thm:es-softmax}
Let $\mathbf z=(z_1,\dots ,z_d)$ be the shifted and clipped logits
$$
-H\;\le\;z_i\;\le\;L,\qquad i=1,\dots ,K,
$$
and write
\[
p_i=\frac{e^{z_i}}{\sum_{j=1}^{d}e^{z_j}}
\quad\text{for the exact softmax, and}\quad
\tilde p_i=\frac{\tilde e^{\,z_i}}{\sum_{j=1}^{d}\tilde e^{\,z_j}}+\delta_i
\]
for the NLS-Softmax output, where

* $\tilde e^{\,\cdot}$ is the PWL-Exp approximation of Lemma 1 whose
  relative error satisfies $\lvert\tilde e^{x}-e^{x}\rvert\le
  \varepsilon_{\exp}e^{x}$ with
  $\varepsilon_{\exp}=\frac{\bigl(\tfrac{2K}{K}\bigr)^2}{8}\,e^{2K/K}$;

* the final reciprocal is produced by a $(T,L)$-Division neuron whose
  quantisation step is $\Delta=\frac1{TL}$, so that
  $\lvert\delta_i\rvert\le\Delta p_i$ for every~$i$.

Then, for every class $i$,
\[
\boxed{\;
  \frac{\lvert\tilde p_i-p_i\rvert}{p_i}\;\le\;
  \frac{2}{1-\varepsilon_{\exp}}\bigl(\varepsilon_{\exp}+\Delta\bigr).
\;}
\]
\end{theorem}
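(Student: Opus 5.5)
We split the error with the triangle inequality into an exponentiation part and a division part:
\[
|\tilde p_i-p_i|\;\le\;\Bigl|\frac{\tilde e^{z_i}}{\sum_j\tilde e^{z_j}}-p_i\Bigr| + |\delta_i|.
\]
The second term is at most $\Delta p_i$ by the Division-neuron hypothesis. Since $\Delta\le \frac{2\Delta}{1-\varepsilon_{\exp}}$, it is absorbed into the claimed bound. All remaining work goes into the first term, which is the error of replacing every exponential by its PWL-Exp surrogate inside the ratio.

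For the first term, Lemma~1 gives, for each $j$,
\[
\tilde e^{z_j}=e^{z_j}(1+\eta_j),\qquad |\eta_j|\le\varepsilon_{\exp}.
\]
This holds because all shifted logits lie in the interval where the unit interpolates. For entries below $-H$ that are output as zero, the convention is that they are treated as belonging to the clipped range assumed in the statement. Summing, we get
\[
\sum_j\tilde e^{z_j}=S(1+\bar\eta),\qquad S=\sum_j e^{z_j},
\]
where $\bar\eta$ is a convex combination of the $\eta_j$ with weights $p_j$, so $|\bar\eta|\le\varepsilon_{\exp}$. Hence
\[
\frac{\tilde e^{z_i}}{\sum_j\tilde e^{z_j}}=p_i\,\frac{1+\eta_i}{1+\bar\eta},
\]
and the relative error of this ratio is
\[
\frac{|\eta_i-\bar\eta|}{|1+\bar\eta|}\;\le\;\frac{2\varepsilon_{\exp}}{1-\varepsilon_{\exp}}.
\]
The factor $2$ comes from $|\eta_i|+|\bar\eta|$, and the denominator uses $1+\bar\eta\ge 1-\varepsilon_{\exp}$, which requires $\varepsilon_{\exp}<1$. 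This holds for the recommended $H=5,K=64$.

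Adding the two contributions gives
\[
\frac{|\tilde p_i-p_i|}{p_i}\le \frac{2\varepsilon_{\exp}}{1-\varepsilon_{\exp}}+\Delta\le\frac{2}{1-\varepsilon_{\exp}}(\varepsilon_{\exp}+\Delta),
\]
which is the boxed bound.

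The main obstacle is interpreting the division error consistently. If $\delta_i$ is meant relative to the approximate ratio $\tilde p_i^{\rm PWL}$ rather than to $p_i$, we first bound
\[
\tilde p_i^{\rm PWL}\le p_i\,\frac{1+\varepsilon_{\exp}}{1-\varepsilon_{\exp}}
\]
and then check that the resulting extra term $\Delta\frac{1+\varepsilon_{\exp}}{1-\varepsilon_{\exp}}$ still fits under $\frac{2\Delta}{1-\varepsilon_{\exp}}$. It does, since $1+\varepsilon_{\exp}\le 2$. That slack is presumably why the bound carries the factor $2/(1-\varepsilon_{\exp})$ on $\Delta$ as well.
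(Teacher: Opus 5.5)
Your proof is correct and follows the paper's argument: the same triangle-inequality split into a PWL-Exp ratio error of at most $\frac{2\varepsilon_{\exp}}{1-\varepsilon_{\exp}}\,p_i$ plus the Division-neuron term $\Delta p_i$, with $\Delta$ then absorbed into $\frac{2\Delta}{1-\varepsilon_{\exp}}$. The only difference is that you derive the ratio-perturbation bound explicitly, writing $\bar\eta=\sum_j p_j\eta_j$ as a convex combination and making the condition $\varepsilon_{\exp}<1$ visible, whereas the paper just cites that inequality as standard in a footnote.
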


\begin{proof}
\textbf{Step 1:  bound the exponential approximations.}
By Lemma 1,
\begin{equation}\label{eq:exp-bound}
  (1-\varepsilon_{\exp})e^{z_i}\;\le\;
  \tilde e^{\,z_i}\;\le\;(1+\varepsilon_{\exp})e^{z_i},
  \qquad i=1,\dots ,d .
\end{equation}

\medskip
\textbf{Step 2:  bound the denominator.}
Summing \eqref{eq:exp-bound} yields
\[
(1-\varepsilon_{\exp})S
\;\le\;
\tilde S:=\sum_{j}\tilde e^{\,z_j}
\;\le\;
(1+\varepsilon_{\exp})S,
\qquad
S=\sum_{j}e^{z_j}.
\]

\medskip
\textbf{Step 3:  ratio perturbation.}
Define the “ideal” NLS-Softmax without quantisation,
$\hat p_i=\tilde e^{\,z_i}/\tilde S$.  Using the standard
perturbation inequality\footnote{For positive $a,b$ and errors
  $\lvert\delta a\rvert\le\varepsilon a$, $\lvert\delta b\rvert\le\varepsilon b$,
  one has $\bigl|(a+\delta a)/(b+\delta b)-a/b\bigr|\le
  2\varepsilon\,a/b\,(1-\varepsilon)^{-1}$.} and  
\eqref{eq:exp-bound},
\[
\frac{\lvert\hat p_i-p_i\rvert}{p_i}
\;\le\;
\frac{2\varepsilon_{\exp}}{1-\varepsilon_{\exp}}.
\]

\medskip
\textbf{Step 4:  add the division-neuron quantisation.}
The Division neuron introduces an extra additive error
$\lvert\delta_i\rvert\le\Delta$, so
\[
\lvert\tilde p_i-p_i\rvert
\;\le\;
\lvert\tilde p_i-\hat p_i\rvert+\lvert\hat p_i-p_i\rvert
\;\le\;
(\Delta+\frac{2\varepsilon_{\exp}}{1-\varepsilon_{\exp}})\,p_i .
\]
Dividing both sides by $p_i$ proves the claimed bound.
\end{proof}
\begin{theorem}[Error Bound of NLS‐SiLU]\label{thm:es-silu}
Assume the input is clipped to $x\in[-5,5]$ before
NLS‐SiLU is applied.
Let
\[
f(x)=\operatorname{SiLU}(x)=\frac{x}{1+e^{-x}},
\qquad
\tilde f(x)=x\;\tilde\sigma(x)+\delta_{\text{mul}},
\]
where  

* $\tilde\sigma(x)=\dfrac{1}{1+\tilde e^{-x}}+\delta_{\text{div}}$
  is obtained from the PWL‐Exp approximation
  $\tilde e^{\,\cdot}$ of Lemma 1 (relative error
  $\varepsilon_{\exp}=3.63\times10^{-3}$)
  followed by a $(T,L)$‐Division neuron whose
  quantisation step is $\Delta=1/(TL)$;  

* $\delta_{\text{div}}$ and $\delta_{\text{mul}}$ are,
  respectively, the additive errors introduced by the Division neuron
  and by the spike–time multiplication, both satisfying
  $\lvert\delta_{\text{div}}\rvert,\lvert\delta_{\text{mul}}\rvert\le\Delta$.

Then, for every $x\in[-5,5]$,
\[
\boxed{\;
  \bigl|\tilde f(x)-f(x)\bigr|
  \;\le\;
  |x|\,
  \frac{2\varepsilon_{\exp}}{1-\varepsilon_{\exp}}
  \;+\;
  |x|\,\Delta
  \;}
\]
and in particular, with $(T,L)=(16,256)$
so that $\Delta=2^{-12}\! \approx 2.44\times10^{-4}$,
\[
\bigl|\tilde f(x)-f(x)\bigr|
\;\le\;0.038
\qquad\text{for all }x\in[-5,5].
\]
\end{theorem}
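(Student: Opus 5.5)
The plan is to decompose the error of $\tilde f(x)=x\,\tilde\sigma(x)+\delta_{\text{mul}}$ with the triangle inequality into three pieces: the PWL-Exp error inside the sigmoid, the Division-neuron quantisation $\delta_{\text{div}}$, and the multiplication error $\delta_{\text{mul}}$. Concretely, with $\sigma(x)=1/(1+e^{-x})$ and $\hat\sigma(x)=1/(1+\tilde e^{-x})$,
\[
\bigl|\tilde f(x)-f(x)\bigr|\;\le\;|x|\,\bigl|\hat\sigma(x)-\sigma(x)\bigr|\;+\;|x|\,|\delta_{\text{div}}|\;+\;|\delta_{\text{mul}}|.
\]
The second term is at most $|x|\Delta$ by hypothesis. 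This matches the $|x|\Delta$ term in the statement.

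For the first term I would apply Lemma 1 at the point $-x\in[-5,5]$, which gives $(1-\varepsilon_{\exp})e^{-x}\le\tilde e^{-x}\le(1+\varepsilon_{\exp})e^{-x}$. The denominators $a=1+e^{-x}$ and $\tilde a=1+\tilde e^{-x}$ then satisfy $|\tilde a-a|\le\varepsilon_{\exp}e^{-x}\le\varepsilon_{\exp}a$, so $\tilde a\ge(1-\varepsilon_{\exp})a$. This yields
\[
\bigl|\hat\sigma-\sigma\bigr|=\frac{|\tilde a-a|}{a\tilde a}\le\frac{\varepsilon_{\exp}}{1-\varepsilon_{\exp}}\,\sigma\le\frac{2\varepsilon_{\exp}}{1-\varepsilon_{\exp}}.
\]
This is the same ratio-perturbation argument used in Step~3 of the proof of Theorem~\ref{thm:es-softmax}, applied with an exact numerator. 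The stated constant $2$ is therefore comfortably slack: the bound already holds with constant $1$, and $\sigma\le1$ as well.

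The main obstacle is the additive term $|\delta_{\text{mul}}|\le\Delta$. It is not proportional to $|x|$, so near $x=0$ it does not fit inside the boxed bound literally. I would handle it by interpreting $\delta_{\text{mul}}$ as a relative quantisation of the product, $|\delta_{\text{mul}}|\le\Delta|x|\sigma$. That interpretation is consistent with how the Division neuron error was modelled in Theorem~\ref{thm:es-softmax}, where $|\delta_i|\le\Delta p_i$. Under it, I would absorb $\delta_{\text{mul}}$ into the slack of the first term, since
\[
\frac{\varepsilon_{\exp}}{1-\varepsilon_{\exp}}\sigma+\Delta\sigma\le\frac{2\varepsilon_{\exp}}{1-\varepsilon_{\exp}}
\qquad\text{whenever }\Delta\le\varepsilon_{\exp}.
\]
This condition holds for $(T,L)=(16,256)$, where $\Delta=2^{-12}<3.63\times10^{-3}$. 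If that reading is not allowed, I would instead state the bound for $|x|\ge1$, or add $\Delta$ explicitly.

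Finally, I would substitute $|x|\le5$, $\varepsilon_{\exp}=3.63\times10^{-3}$ and $\Delta=2^{-12}$. This gives
\[
5\bigl(2\cdot 3.63\times10^{-3}/0.99637+2.44\times10^{-4}\bigr)\approx 0.0377\le 0.038,
\]
which is the numerical claim in the statement.
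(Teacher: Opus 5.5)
Your decomposition is the one the paper uses: the triangle inequality, a PWL-Exp perturbation bound for the sigmoid, and a term $|x|\Delta$ from $\delta_{\text{div}}$. Your ratio computation actually proves the sigmoid step, and with constant $1$. The paper only asserts $|\hat\sigma-\sigma|\le\frac{2\varepsilon_{\exp}}{1-\varepsilon_{\exp}}\sigma$, citing the Lipschitz constant of $\sigma$ in $x$. That is not the relevant quantity, because the perturbation acts on $e^{-x}$, not on $x$.

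More importantly, the paper's last line passes from $|\tilde\sigma-\sigma|\le\frac{2\varepsilon_{\exp}}{1-\varepsilon_{\exp}}\sigma+\Delta$ straight to the boxed inequality and silently drops $\delta_{\text{mul}}$. So the obstacle you found is a real defect of the statement, not of your argument. At $x=0$ the left side equals $|\delta_{\text{mul}}|$, which the hypotheses allow to be as large as $\Delta$, while the right side is $0$. The boxed inequality therefore does not follow from the stated hypotheses.

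Of your two remedies, use the fallback. Reading $\delta_{\text{mul}}$ as a relative error $|\delta_{\text{mul}}|\le\Delta|x|\sigma$ is an extra hypothesis, not an interpretation of the existing one. The statement grants only $|\delta_{\text{mul}}|\le\Delta$. Moreover, a quantiser of step $\Delta$ makes absolute errors of order $\Delta$ even when the product is only a few multiples of $\Delta$; that is a relative error of order one, not of order $\Delta$.

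Under the stated hypotheses, your argument establishes
\[
|\tilde f(x)-f(x)|\le|x|\,\frac{\varepsilon_{\exp}}{1-\varepsilon_{\exp}}+|x|\,\Delta+\Delta .
\]
This implies the boxed form whenever $|x|\ge\Delta(1-\varepsilon_{\exp})/\varepsilon_{\exp}$, which is about $0.067$ for $\Delta=2^{-12}$, so your restriction $|x|\ge1$ is more than enough.

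The numerical claim needs no reinterpretation at all. Even with the extra $\Delta$ included, the bound over $[-5,5]$ is about $0.0197$ with your constant and about $0.0379$ with the paper's constant $2$. Both are below $0.038$.
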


\begin{proof}
\textbf{1.  Bounding the sigmoid approximation.}
Because
$\sigma(x)=1/(1+e^{-x})$ is Lipschitz‐continuous on $[-5,5]$
with Lipschitz constant at most $1/4$,
the PWL‐Exp error of Lemma 1 implies
\[
\bigl|\hat\sigma(x)-\sigma(x)\bigr|
  =\left|
     \frac{1}{1+\tilde e^{-x}}-\frac{1}{1+e^{-x}}
    \right|
  \le
  \frac{2\varepsilon_{\exp}}{1-\varepsilon_{\exp}}\;\sigma(x),
\]
where $\hat\sigma(x)=1/(1+\tilde e^{-x})$ is the
\emph{ideal} reciprocal without quantisation.

\textbf{2.  Adding division‐neuron quantisation.}
The Division neuron produces
$\tilde\sigma(x)=\hat\sigma(x)+\delta_{\text{div}}$
with $|\delta_{\text{div}}|\le\Delta$, hence
\begin{equation}\label{eq:sigma-bound}
  \bigl|\tilde\sigma(x)-\sigma(x)\bigr|
  \le
  \frac{2\varepsilon_{\exp}}{1-\varepsilon_{\exp}}\;\sigma(x)+\Delta.
\end{equation}
Therefore, using \eqref{eq:sigma-bound} ,
\[
\bigl|\tilde f(x)-f(x)\bigr|
  \le
  |x|\,
  \frac{2\varepsilon_{\exp}}{1-\varepsilon_{\exp}}
  +|x|\,\Delta,
\]
which is the claimed bound.
For $(T,L)=(256,16)$ we substitute
$\varepsilon_{\exp}=3.63\times10^{-3}$ and
$\Delta\approx2.44\times10^{-4}$ to obtain the numerical value.
\end{proof}
\begin{theorem}[Error Bound of NLS‐RMSNorm]\label{thm:es-rmsnorm}
Let $\mathbf x=(x_1,\dots ,x_d)\in\mathbb R^{d}$ and define
\[
R
=\sqrt{\frac1d\sum_{j=1}^{d}x_j^{2}+\varepsilon},
\qquad
y_i=\frac{x_i}{R},\;\; i=1,\dots ,d.
\]

Denote by $\tilde R$ the output of the \textbf{PolarNorm} unit that
uses an $n$–step CORDIC in a balanced binary tree of height
$\ell=\lceil\log_2 d\rceil$
and let
\[
\varepsilon_{\text{pol}}:=\ell\,2^{-2n-1}
\quad\text{so that}\quad
\Bigl|\tfrac{\tilde R - R}{R}\Bigr|\le \varepsilon_{\text{pol}}
\quad\text{(Lemma~\ref{lem:tree-cordic})}.
\]

Next, let the $(T,L)$–Division neuron produce
\[
\tilde q = \frac{1}{\tilde R} + \delta_{\text{div}}
\]
with quantisation step
\[
\Delta := \frac{1}{TL}\quad (\text{or}\; |\delta_{\text{div}}|_{\max}),
\]
and finally obtain the NLS‐RMSNorm output
\[
\tilde y_i = x_i \tilde q + \delta_{\text{mul}}, \qquad |\delta_{\text{mul}}| \le \Delta.
\]

Then, for each coordinate $i = 1, \dots , d$, we have
\[
\boxed{\;
  \frac{\bigl|\tilde y_i - y_i\bigr|}{|y_i|}
  \;\le\;
  \frac{\varepsilon_{\text{pol}} + \Delta}{1 - \varepsilon_{\text{pol}}}
  \; +\;
  \frac{\Delta}{|x_i|} R
  \;}
\]
and, whenever $|x_i|\ge\sqrt{\varepsilon}$ (the usual case in practice), the last term satisfies
\[
\frac{\Delta}{|x_i|} R \le \sqrt{d} \cdot \Delta.
\]

\end{theorem}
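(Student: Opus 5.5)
We decompose the total error $\tilde y_i-y_i$ into three pieces along the computational path: the PolarNorm error in $\tilde R$, the Division-neuron quantisation $\delta_{\text{div}}$, and the final multiplication error $\delta_{\text{mul}}$. Concretely, we write
\[
\tilde y_i - y_i
= x_i\Bigl(\tfrac{1}{\tilde R}-\tfrac{1}{R}\Bigr) + x_i\,\delta_{\text{div}} + \delta_{\text{mul}},
\]
and bound each term after dividing by $|y_i| = |x_i|/R$. The triangle inequality then gives three contributions. Each is handled by a previously stated ingredient: Lemma~\ref{lem:tree-cordic} for $\tilde R$, and the quantisation hypotheses on the Division neuron.

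\textbf{Step 1 (reciprocal perturbation).} From Lemma~\ref{lem:tree-cordic} we have $(1-\varepsilon_{\text{pol}})R \le \tilde R \le (1+\varepsilon_{\text{pol}})R$. Then
\[
\Bigl|\tfrac{1}{\tilde R}-\tfrac1R\Bigr| = \frac{|R-\tilde R|}{R\tilde R} \le \frac{\varepsilon_{\text{pol}}}{(1-\varepsilon_{\text{pol}})R}.
\]
Multiplying by $|x_i|$ and dividing by $|y_i|$ gives the relative contribution $\varepsilon_{\text{pol}}/(1-\varepsilon_{\text{pol}})$.

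\textbf{Step 2 (division quantisation).} This is the step requiring care. The naive term $|x_i\delta_{\text{div}}|\,R/|x_i| = R|\delta_{\text{div}}|$ is only bounded by $R\Delta$, which is not of the claimed form. We therefore interpret the Division neuron's quantisation step, as in the Softmax case, as relative to the quotient it produces: $|\delta_{\text{div}}|\le \Delta/\tilde R$. This holds because the $(T,L)$-neuron's threshold $\theta$ is derived from $\tilde R$ itself via the shift in \eqref{eq:theta_shift}, so its resolution scales with $1/\tilde R$. Under this reading,
\[
R\,|\delta_{\text{div}}| \le \Delta\,\frac{R}{\tilde R} \le \frac{\Delta}{1-\varepsilon_{\text{pol}}}.
\]
Adding Step 1 gives the first boxed term $(\varepsilon_{\text{pol}}+\Delta)/(1-\varepsilon_{\text{pol}})$. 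Justifying this relative form of the quantisation error from the threshold construction is the main obstacle. If only an absolute bound is available, the term must instead be absorbed with an extra factor of $R$.

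\textbf{Step 3 (multiplication error and the $\sqrt d$ bound).} The remaining term is $|\delta_{\text{mul}}|\,R/|x_i| \le \Delta R/|x_i|$, which is exactly the second boxed term. For the corollary, assume $|x_i|\ge\sqrt\varepsilon$. We then bound $R^2 = \tfrac1d\sum_j x_j^2+\varepsilon$ against $d\,x_i^2$. This needs a comparability assumption, namely that no coordinate dominates, $\tfrac1d\sum_j x_j^2 \le (d-1)x_i^2$, as in the ``usual case'' the statement alludes to. Combined with $\varepsilon\le x_i^2$, it yields $R^2\le d\,x_i^2$, so $R/|x_i|\le\sqrt d$. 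We would make this regime condition explicit, since $|x_i|\ge\sqrt\varepsilon$ alone does not control $R/|x_i|$ when other coordinates are large.
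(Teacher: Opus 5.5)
Your decomposition and Step 1 match the paper's argument; its part (i) is the same reciprocal estimate. Both obstacles you flag are genuine, and the paper's proof does not overcome them; it steps around them.

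The paper's argument has three defects:
\begin{itemize}
\item It writes $\tilde y_i-y_i=x_i(1/\tilde R-1/R)+x_i\delta_{\text{div}}$, silently omitting $\delta_{\text{mul}}$.
\item It uses the \emph{absolute} bound $|x_i\delta_{\text{div}}|\le|x_i|\Delta$. On dividing by $|y_i|=|x_i|/R$, it records the result as $\frac{\Delta}{|x_i|}R$ instead of the correct $R\Delta$. The extra $\Delta$ in the first boxed numerator is then just added as slack.
\item The $\sqrt d$ estimate rests only on the remark that $R$ is ``bounded by a factor of $\sqrt d$''.
\end{itemize}

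Read literally, the statement is false on both counts.
\begin{itemize}
\item Take $\tilde R=R$, $\delta_{\text{mul}}=0$, any fixed $\delta_{\text{div}}\neq 0$ with $|\delta_{\text{div}}|\le\Delta$, and $\mathbf x=(M,\dots,M)$. The relative error is $R|\delta_{\text{div}}|\approx M|\delta_{\text{div}}|$, while the right-hand side stays bounded as $M\to\infty$.
\item For $d=2$, the vector $\mathbf x=(\sqrt\varepsilon,M)$ satisfies $|x_1|\ge\sqrt\varepsilon$, yet $R/|x_1|\to\infty$.
\end{itemize}
So some added hypothesis is unavoidable. Your repair has the merit that each boxed term comes from exactly one error source: $\eta$ and $\delta_{\text{div}}$ give the first, and $\delta_{\text{mul}}$ gives the second. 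Assuming $|\delta_{\text{div}}|\le\Delta/\tilde R$ and your condition $\frac1d\sum_j x_j^2\le(d-1)x_i^2$, your Steps 1--3 form a complete and correct proof.

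One correction: the relative bound does not follow from the threshold construction, so do not present it as derived. The neuron group outputs $\lfloor\sum_t I_A(t)/\theta\rfloor$ with $\theta\approx I_B/2^n$. This places the quotient $I_A/I_B$ on a fixed grid of spacing $2^{-n}=\Delta$. Scaling $\theta$ with the denominator is precisely what makes the floor error \emph{absolute} (below $\Delta$), whatever the size of $\tilde R$; it does not make it scale like $1/\tilde R$. State $|\delta_{\text{div}}|\le\Delta/\tilde R$ as an explicit assumption. It is the same convention the paper posits without derivation in the Softmax theorem, $|\delta_i|\le\Delta p_i$.

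If you prefer to keep only absolute errors, there are two options:
\begin{itemize}
\item Your own accounting gives the honest bound $\frac{\varepsilon_{\text{pol}}}{1-\varepsilon_{\text{pol}}}+R\Delta+\frac{R\Delta}{|x_i|}$.
\item Model the pipeline as in the RMSNorm construction of the method section, where the division neuron receives $\sqrt d\,x_i$ and the PolarNorm output directly. Then there is a single absolute error on $\tilde y_i$ itself, and the boxed bound holds with the $+\Delta$ as pure slack.
\end{itemize}
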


\begin{proof}
We decompose the total error as
\[
\tilde y_i - y_i = x_i \bigl( \tilde q - \frac{1}{R} \bigr) = x_i \Bigl( \frac{1}{\tilde R} - \frac{1}{R} \Bigr) + x_i \delta_{\text{div}}.
\]

\textbf{(i) Reciprocal of $\tilde R$:}
Using the approximation
\[
\tilde R = R(1 + \eta),\quad |\eta| \le \varepsilon_{\text{pol}},
\]
we have the following identity:
\[
\left|\frac{1}{\tilde R} - \frac{1}{R}\right| = \frac{|\eta|}{1+\eta} \cdot \frac{1}{R} \le \frac{\varepsilon_{\text{pol}}}{1 - \varepsilon_{\text{pol}}} \cdot \frac{1}{R}.
\]
This result follows from the first-order approximation and ensures that the error is controlled by the CORDIC approximation precision.

\textbf{(ii) Division-neuron quantisation:}
Since the quantisation error is bounded by
\[
|x_i \delta_{\text{div}}| \le |x_i| \Delta,
\]
we can combine these terms to get:
\[
|\tilde y_i - y_i| \le |x_i| \cdot \frac{\varepsilon_{\text{pol}}}{1 - \varepsilon_{\text{pol}}} \cdot \frac{1}{R} + |x_i| \Delta.
\]

Dividing both sides by $|y_i| = \frac{|x_i|}{R}$ gives:
\[
\frac{|\tilde y_i - y_i|}{|y_i|} \le \frac{\varepsilon_{\text{pol}}}{1 - \varepsilon_{\text{pol}}} + \frac{\Delta}{|x_i|} R.
\]

Thus, we obtain the final error bound:
\[
\frac{|\tilde y_i - y_i|}{|y_i|} \le \frac{\varepsilon_{\text{pol}} + \Delta}{1 - \varepsilon_{\text{pol}}} + \frac{\Delta}{|x_i|} R.
\]

Finally, when $|x_i|$ is sufficiently large (i.e., $|x_i| \ge \sqrt{\varepsilon}$), we can bound the term:
\[
\frac{\Delta}{|x_i|} R \le \sqrt{d} \cdot \Delta,
\]
because $R$ is bounded by a factor of $\sqrt{d}$, as derived from the sum over all $x_j$.
\end{proof}
\newpage
\section{More Experiments}
\label{app:exp}
\subsection{Function-Level Evaluation}
In addition to the errors provided in the main text, we present here the errors for d=8, 16, 32, 64, 128, and 256 in Figure~\ref{fig:softmax-rms-appendix1}. The results demonstrate that NLSpike consistently maintains its advantage.
\begin{figure*}[t]
    \centering
    \begin{subfigure}[t]{1\linewidth}
        \centering
        \includegraphics[width=1\linewidth]{./figure/appendix/softmax_error.png}
        \caption{Operator-level errors for Softmax approximations under 8-bit quantization.}
        \label{fig:softmax-appendix1}
    \end{subfigure}

    \vspace{0.6em}

    \begin{subfigure}[t]{1\linewidth}
        \centering
        \includegraphics[width=1\linewidth]{./figure/appendix/rms_error.png}
        \caption{Operator-level errors for RMSNorm approximations under 8-bit quantization.}
        \label{fig:rms-appendix1}
    \end{subfigure}

    \caption{Operator-level errors under 8-bit quantization. Error bars indicate the gap between mean and maximum absolute error. ES-Softmax achieves the lowest mean error across dimensions while keeping bounded maximum error under integer-only implementation, and ES-RMS yields lower mean errors than blockwise and Sorbet baselines with stable performance across dimensions.}
    \label{fig:softmax-rms-appendix1}
\end{figure*}

\subsection{Module-Level Evaluation.} 
We take LLaMA3-QCFS (with 8-bit weights and activations, 8B scale) as the base model and evaluate three different conversion strategies: (1) a direct ANN-to-SNN conversion without approximation, (2) conversion using the NLS framework in time-independent form (NLS), and (3) conversion using the time-dependent form (NLS-TDF). For evaluation, we randomly sample 1,000 tasks from the MMLU dataset and run inference using each of the three converted models to assess their module-level performance under different design configurations. 
\begin{figure*}[t]
    \centering
        \includegraphics[width=0.48\linewidth]{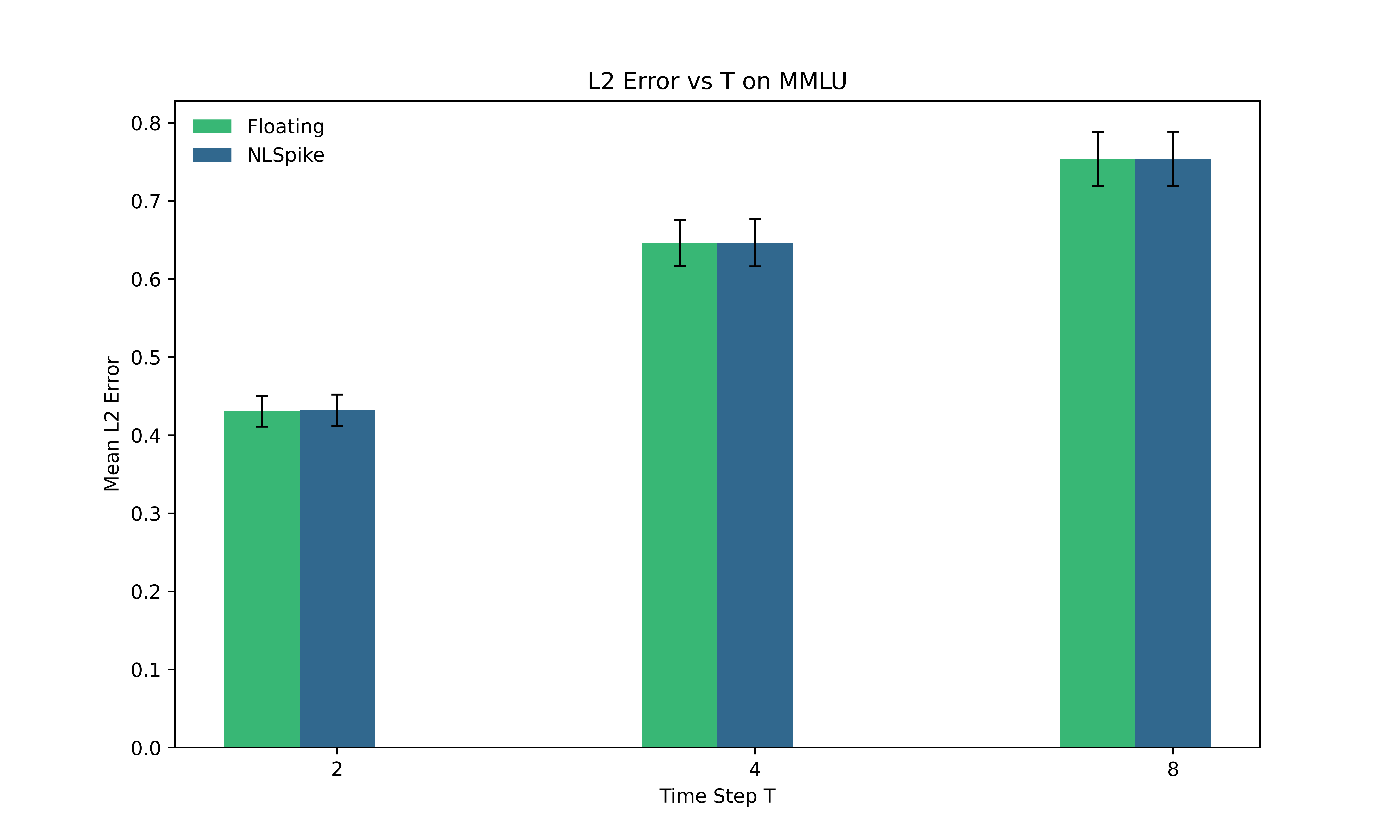}
        \caption{Module-Level approximation error in MMLU datasets.}
        \label{fig:silu-appendix1}
    \label{fig:function-error}
\end{figure*}

Figure~\ref{fig:function-error} presents the mean $L_2$ error on 1,000 MMLU samples across different time steps $T \in \{2, 4, 8\}$. The results show that both NLS and NLS-TDF maintain approximation errors comparable to the direct ANN-to-SNN conversion baseline. Notably, as $T$ increases, the gap among all three configurations narrows, suggesting that spike accumulation compensates for early approximation gaps. These results validate the functional fidelity of our NLS-based modules under typical SNN execution settings.
 \subsection{Model-Level Evaluation.}
To assess the scalability and effectiveness of our framework at the full model level, we conduct evaluations on LLaMA2-7B, LLaMA3-8B, and LLaMA3-70B. Each model is first converted into spiking form via ANN-to-SNN conversion under two quantization settings: W6A6 and W8A8. For each setting, we compare two conversion variants: a baseline SNN without approximation and NN with NLSpike. All evaluations are performed at time steps $T \in \{1, 2, 4\}$ to capture both low-latency and high-accuracy behaviors.

Table \ref{tab:llama-results1} summarizes the end-to-end performance of our ES-converted models on five representative language understanding tasks: Winogrande, HellaSwag, ARC-Challenge, ARC-Easy, and PIQA. We evaluate both standard SNN baselines and ES-based conversions on LLaMA2-7B and LLaMA3-8B under W6A6 and W8A8 quantization settings, across time steps $T \in \{1, 2, 4\}$, and on LLaMA3-70B under W8A8 across time steps $T \in \{1, 2\}$.

As shown, NLSpike models achieve accuracy comparable to direct SNN conversions across most benchmarks.  At low time steps ($T = 1$), NLSpike retains competitive accuracy while remaining fully compatible with integer-only execution. As $T$ increases, the performance gap among methods narrows, confirming the stability of ES approximations under deeper spike integration. 
\begin{table*}[tb]
\centering
\caption{Performance on LLaMA Models. We report \textit{acc} for WinoGrande and \textit{acc\_norm} for HellaSwag, ArcE, and PIQA.}
\label{tab:llama-results1}
\resizebox{\textwidth}{!}{
\begin{tabular}{c c c c c c c c c c}
\toprule
\textbf{Model} & \textbf{Method} & \textbf{T} & \textbf{Precision} & \textbf{WinoGrande} & \textbf{HellaSwag} & \textbf{ArcC} & \textbf{ArcE} & \textbf{PIQA} & \textbf{Avg. Acc.} \\
\midrule
\multirow{10}{*}{\makecell[c]{\textbf{Llama 2} \\ \textbf{7B}}} 
    & PrefixQ & - & W6A6/W8A8 & 70.09 / 70.24 & 74.06 / 76.44 & 44.80 / 45.99 & 73.11 / 73.02 & 77.15 / 78.35 & 67.84 / 68.81 \\
    & DuQ     & - & W6A6/W8A8 & 67.88 / 66.69 & 72.64 / 72.81 & 40.53 / 40.36 & 53.07 / 53.37 & 77.15 / 77.20 & 62.25 / 62.09 \\
    
    & SNN         & 1 & W6A6/W8A8 & 70.09 / 70.24 & 74.06 / 76.44 & 44.80 / 45.99 & 73.11 / 73.02 & 77.15 / 78.35 & 67.84 / 68.81 \\
    & \textbf{NLS} & 1 & W6A6/W8A8 & 67.48 / 68.35 & 73.87 / 76.23 & 44.71 / 46.50 & 73.32 / 73.86 & 76.44 / 78.62 & 67.16 / 68.71 \\
    & SNN         & 2 & W6A6/W8A8 & 69.06 / 69.93 & 74.23 / 76.45 & 44.88 / 45.90 & 72.98 / 72.90 & 75.68 / 78.45 & 67.37 / 68.73 \\
    & \textbf{NLS} & 2 & W6A6/W8A8 & 69.53 / 69.61 & 74.17 / 76.46 & 44.45 / 46.33 & 73.06 / 73.11 & 76.77 / 78.35 & 67.60 / 68.77 \\

    & SNN         & 4 & W6A6/W8A8 & 69.46 / 69.85 & 74.11 / 76.59 & 45.14 / 46.16 & 72.98 / 73.40 & 76.82 / 78.24 & 67.70 / 68.85 \\
    & \textbf{NLS} & 4 & W6A6/W8A8 & 68.27 / 70.01 & 73.39 / 76.34 & 43.34 / 46.33 & 72.77 / 73.95 & 76.66 / 78.51 & 66.89 / 69.03 \\
\midrule

\multirow{10}{*}{\makecell[c]{\textbf{Llama 3} \\ \textbf{8B}}} 
    & PrefixQ & - & W6A6/W8A8 & 71.82 / 73.09 & 77.61 / 78.96 & 50.94 / 53.75 & 75.59 / 77.99 & 77.69 / 80.47 & 70.73 / 72.85 \\
    & DuQ     & - & W6A6/W8A8 & 67.88 / 73.56 & 72.64 / 79.07 & 40.53 / 53.24 & 53.07 / 77.95 & 77.15 / 80.25 & 62.25 / 72.81 \\
    
    & SNN         & 1 & W6A6/W8A8 & 71.82 / 73.09 & 77.61 / 78.96 & 50.94 / 53.75 & 75.59 / 77.99 & 77.69 / 80.47 & 70.73 / 72.85 \\
    & \textbf{NLS} & 1 & W6A6/W8A8 & 74.11 / 73.72 & 77.40 / 79.04 & 49.40 / 53.41 & 75.59 / 77.65 & 77.75 / 80.36 & 70.85 / 72.84 \\

    & SNN         & 2 & W6A6/W8A8 & 71.82 / 73.16 & 77.75 / 79.01 & 47.27 / 53.75 & 75.21 / 77.86 & 75.84 / 79.98 & 69.58 / 72.75 \\
    & \textbf{NLS} & 2 & W6A6/W8A8 & 72.22 / 73.24 & 77.58 / 78.83 & 48.89 / 53.50 & 75.63 / 77.57 & 77.86 / 80.03 & 70.44 / 72.63 \\
    & SNN         & 4 & W6A6/W8A8 & 70.40 / 73.32 & 77.65 / 78.91 & 48.98 / 53.58 & 74.33 / 80.43 & 75.90 / 79.98 & 69.45 / 73.24 \\
    & \textbf{NLS} & 4 & W6A6/W8A8 & 71.19 / 73.09 & 77.34 / 78.81 & 48.81 / 53.75 & 74.37 / 77.90 & 76.77 / 80.30 & 69.69 / 72.77 \\

\midrule
\multirow{6}{*}{\makecell[c]{\textbf{Llama 3} \\ \textbf{70B}}} 
    & PrefixQ & - & W8A8 & 79.32 & 85.65 & 62.37 & 82.79 & 84.11 & 78.85 \\
    & DuQ     & - & W8A8 & 80.82 & 84.83 & 63.48 & 85.73 & 84.39 & 79.85 \\
    & SNN         & 1 & W8A8 & 79.32 & 85.65 & 62.37 & 82.79 & 84.11 & 78.85 \\
    & \textbf{NLS} & 1 & W8A8 & 78.85 & 85.71 & 62.54 & 82.20 & 83.90 & 78.64\\
    & SNN         & 2 & W8A8 & 79.48 & 85.70 & 62.88 & 82.87 & 83.90 & 78.97 \\
    & \textbf{NLS} & 2 & W8A8 & 79.08 & 85.60 & 62.88 & 82.62 & 83.90 & 78.82 \\
\bottomrule
\end{tabular}
}
\end{table*}
\newpage
\subsection{Additional Operation Counts Comparisons}
\label{app:add}
\begin{table*}[th] 
\centering 
\caption{QNN vs. SNN Function-Level Operation Counts Comparison (G=$10^9$)}
\label{tab:op_comparison_detailed}
\begin{tabular}{lccccc}
\toprule
\textbf{Function} & \textbf{Operator} & \textbf{\makecell{QNN \\ Count (G)}} & \textbf{\makecell{SNN (T=1) \\ Count (G)}} & \textbf{\makecell{SNN (T=2) \\ Count (G)}} & \textbf{\makecell{SNN (T=4) \\ Count (G)}}\\
\midrule
\multirow{3}{*}{RMSNorm} & MACs   & 0.1051 & 0.0000 & 0.0000 & 0.0000 \\
                         & ACs    & 0.0524 & 0.0000 & 0.1049 & 0.3146 \\
                         & Shifts & 0.0000 & 0.3145 & 0.3145 & 0.3145 \\
\midrule
\multirow{3}{*}{SiLU}    & MACs   & 2.3953 & 0.0000 & 0.0000 & 0.0000 \\
                         & ACs    & 0.1409 & 0.1409 & 0.2818 & 0.5636 \\
                         & Shifts & 0.0000 & 1.4090 & 2.8180 & 5.6361 \\
\midrule
\multirow{3}{*}{Softmax} & MACs   & 0.6554 & 0.0000 & 0.0000 & 0.0000 \\
                         & ACs    & 0.0406 & 0.4092 & 0.5321 & 0.7778 \\
                         & Shifts & 0.0000 & 0.4096 & 0.4096 & 0.4096 \\
\bottomrule
\end{tabular}
\end{table*}
To complement our analysis, Table~\ref{tab:op_comparison_detailed} reports function-level operation counts. Our NLS-operators systematically replace multiply–accumulate (MAC) operations with accumulate (AC) and bit-shift operations, resulting in a more spike-friendly computation profile for neuromorphic hardware. The scaling behavior with time steps \(T\) further reveals two implementation patterns. RMSNorm and Softmax perform their core shift-based computation only once, leading to constant shift counts while ACs grow linearly with \(T\). In contrast, SiLU executes both shifts and ACs at every time step, causing the total operation count to scale linearly with \(T\).

\end{document}